\newcolumntype{.}{D{.}{.}{-1}}
\newcommand\thickbar[1]{\accentset{\rule{.4em}{.8pt}}{#1}}
\newcommand{\esssup}{\mathop{\mathrm{ess}\mbox{-}\mathrm{sup}}}
\begin{document}
\title
{ $\left( \beta, \varpi \right)$-stability for cross-validation and \\ the choice of the number of folds}

\author{\name Ning Xu \email n.xu@sydney.edu.au \\
		\addr School of Economics\\
       	University of Sydney
       \AND
		\name Jian Hong \email jian.hong@sydney.edu.au \\
        \addr School of Economics\\
       	University of Sydney
       \AND
       	\name Timothy C.G. Fisher \email tim.fisher@sydney.edu.au \\
       	\addr School of Economics\\
       	University of Sydney}

\editor{}

\maketitle


\begin{abstract}
In this paper, we introduce a new concept of stability for cross-validation, called the $\left( \beta, \varpi \right)$-stability, and use it as a new perspective to build the general theory for cross-validation. The $\left( \beta, \varpi \right)$-stability mathematically connects the generalization ability and the stability of the cross-validated model via the Rademacher complexity. Our result reveals mathematically the effect of cross-validation from two sides: on one hand, cross-validation picks the model with the best empirical generalization ability by validating all the alternatives on test sets; on the other hand, cross-validation may compromise the stability of the model selection by causing subsampling error. Moreover, the difference between training and test errors in q\textsuperscript{th} round, sometimes referred to as the generalization error, might be autocorrelated on q. Guided by the ideas above, the $\left( \beta, \varpi \right)$-stability help us derivd a new class of Rademacher bounds, referred to as the one-round/convoluted Rademacher bounds, for the stability of cross-validation in both the i.i.d.\ and non-i.i.d.\ cases. For both light-tail and heavy-tail losses, the new bounds quantify the stability of the one-round/average test error of the cross-validated model in terms of its one-round/average training error, the sample sizes $n$, number of folds $K$, the tail property of the loss (encoded as Orlicz-$\Psi_\nu$ norms) and the Rademacher complexity of the model class $\Lambda$. The new class of bounds not only quantitatively reveals the stability of the generalziation ability of the cross-validated model, it also shows empirically the optimal choice for number of folds $K$, at which the upper bound of the one-round/average test error is lowest, or, to put it in another way, where the test error is most stable. 
\end{abstract}

\begin{keywords}
  stability of cross-validation, one-round/convoluted Rademacher-bounds, Orlicz-$\Psi_\nu$ space, minimax-optimal choice of the number of folds, independent blocks.
\end{keywords}

\section{Introduction}

In many applications of statistical learning theory, overfitting arises as a common problem for learning algorithms. To reduce overfitting and improve the generalization ability of the learning algorithm, cross-validation \citep{stone74,stone77} is widely applied to regularize the in-sample performance of the algorithm. Decades of research has produced several variants of cross-validation, such as $K$-fold, repeated $K$-fold \citep{kim2009estimating}, stratified \citep{kohavi1995study}, and Monte Carlo cross-validation \citep{xu2001monte}. Simulation studies have shown that the performance of various cross-validation schemes differs according to the setting, leading to different results about the appropriate form of cross-validation.\footnote{See, for example, \citep{kohavi1995study,bengio2004no}.} At the heart of the matter lies the stability of cross-validation: when applied to sparse modeling methods (such as Lasso), cross-validation often leads to models that are unstable in high-dimensions, and consequently ill-suited for reliable interpretation \citep{lim2016estimation}. In addition, while it is a widely-applied tool for model selection and model evaluation, the general theoretical properties of cross-validation have yet to be fully worked out.

Generally speaking, as suggested by \citet{bousquet2002stability}, the more stable the predicted model on different samples, the more `generalizable' the in-sample learning result is to other samples or to the population. Hence, model complexity, algorithmic stability and generalization ability are deeply connected to one another. Guided by this instinct, our aim is to study the general theoretical properties of cross-validation from the perspective of stability, defined as the (probabilistic) maximal difference between the in-sample error (training error) and the out-of-sample error (the risk on a random new-coming sample), given the model class $\Lambda$. In cross-validation, the random new-coming sample(s) are referred to as the test sets in all $K$ rounds. To formalize it mathematically, we define the \emph{one-round} and \emph{average $\left( \beta, \varpi \right)$-stability} in definition~\ref{defn:stability}:
%
%
\begin{definition}[One-round and average $\left( \beta, \varpi \right)$-stability in K-fold cross validation]
  Let's denote $\left( Y_t^q, X_t^q \right)$ and $\left( Y_s^q, X_s^q \right)$ as the training and test sets, repesctively, in the q\textsuperscript{th} round of cross-validation. Let $n_t$ and $n_s$ be the sizes of the training and test sets respectively. $\mathcal{R}_{n_t} \left( b, Y_t^q, X_t^q \right)$ and $ \mathcal{R}_{n_s} \left( b, Y_s^q, X_s^q \right)$ are denoted as the training error and test error, respectively, for the $q$\textsuperscript{th} round of cross-validation. Given the model class $\Lambda$, $\forall \left( \beta , \varpi \right) \in \mathbb{R}^+ \times \left( 0, 1 \right]$, the functions in $\Lambda$ is \emph{one-round $\left( \beta, \varpi \right)$-stable in $K$-fold cross-validation} if
  \begin{equation}
    \mathrm{Pr} \left\{ \sup_{ b \in \Lambda } \; \left\vert \mathcal{R}_{n_t} \left( b, Y_t^q, X_t^q \right) - \mathcal{R}_{n_s} \left( b, Y_s^q, X_s^q \right) \right\vert \geqslant \beta \right\} 
    \leqslant \varpi.
    \label{eq:one_round_stability}
  \end{equation}
  \noindent
  Likewise, the functions in $\Lambda$ is \emph{average $\left( \beta, \varpi \right)$-stable in $K$-fold cross-validation} if
  \begin{equation}
    \mathrm{Pr} \left\{ \sup_{ b \in \Lambda} \; \left\vert \frac{1}{K} \sum_{q = 1}^{K} \mathcal{R}_{n_t} \left( b, Y_t^q, X_t^q \right)- \frac{1}{K} \sum_{q = 1}^{K} \mathcal{R}_{n_s} \left( b, Y_s^q, X_s^q \right) \right\vert \geqslant \beta \right\} 
    \leqslant \varpi.
    \label{eq:average_stability}
  \end{equation}
\label{defn:stability}
\end{definition}
\noindent
As we show in the proof, given the value of $\varpi$, the value of $\beta$ for either one-round or average stability may be approximated by the Rademacher complexity, $n_t$, $n_s$ and the tail heaviness of the risk distribution. Inspired by the algorithmic stability in \citet{bousquet2002stability}, the one-round or average $\left( \beta, \varpi \right)$ stability help us to quantify the connection between the stability and the generalization ability of cross-validation via model complexity. For both light-tail and heavy-tail losses, we mathematically quantify the trade-off in cross-validation via Rademacher complexity. On one hand, cross-validation improves the generalization ability of the algorithm via validating the learning result on the new-coming samples; on the other hand, it also compromises the generalization ability by making the learning result more instable: extra amount of subsampling error and extra autocorrelation between each round of cross-validation, both considered as the noise as to the `true' pattern, are injected into the learning result. As a result, given the total sample size $n$ and functional space $\Lambda$, to quantify the positive and negative effects of cross-validation on the learning result, we build a new class of bounds for the stability of cross validation. The new bounds in turn help us to define the optimal choice for the number of folds in cross-validation from perspective of stability, at which the upper bound for the average or one-round test error is lowest (or the average or one-round test error is most stable). 

\subsection{Classical generalization bounds}

The classical approach to deriving suitable generalization error bounds yielded the probably approximately correct bounds (PAC bounds), proposed in \citet{valiant1984theory} and developed by Vapnik, Chevonenkis and others. Derived from the classical concentration inequalities, the distribution- and model-free PAC-bounds characterize the relation between population risk and empirical in-sample risk from the perspective of the complexity of a given model class. The PAC bounds are useful for both model evaluation and model selection. Typically, the bounds take the form of a sum of two terms: a sample-based estimate of the in-sample risk and a term penalizing model complexity. One measure of complexity, Rademacher complexity \citep{bartlett2002rademacher}, is defined as follows.
%
%
\begin{definition}[Rademacher complexity]
  Let $P$ be a probability distribution on a set $\mathcal{X}$ and suppose that $X_1, \ldots, X_l$ are independent samples selected on $P$. Let $\Lambda$ be the model class in which all models map from $\mathcal{X}$ to $\mathbf{R}$. Define the random variables
  \begin{equation}
    \widehat{ \mathrm{ RC } } \left( \Lambda \right)
      =
      \mathbf{E}\left[ \sup_{g \in \Lambda} \left\vert \frac{2}{l} \sum_{i=1}^{l} \sigma_i g\left( X_i\right)\right\vert \; \vert \; X_1, \ldots, X_l\right]
  \end{equation}
  where $\sigma_1, \ldots, \sigma_l$ are independent uniform $\left\{ \pm 1 \right\}$-valued random variables. The Rademacher complexity of $\Lambda$ is $\mathrm{RC}_{l}\left( \Lambda \right) = \mathbf{E} \left[ \widehat{\mathrm{RC}}_{l}\left( \Lambda \right) \right]$.
\label{defn:rad_complexity}
\end{definition}
\noindent
The intuition for $\mathrm{RC}_{l}$ is straightforward. It quantifies the extent to which some function in $\Lambda$ is correlated with a binomial noise process of length $l$.\footnote{Another measure of complexity, Gaussian complexity, is closely related to Rademacher complexity. \citet{bartlett2002rademacher} show that Gaussian complexity is bounded by $\mathrm{RC}_{l}$. Hence, we focus on Rademacher complexity.} Intuitively, in finite samples, a complicated model will fit the noise no worse than a less complicated model. The advantages of the Rademacher complexity measure $\mathrm{RC}_{l}$ are that: (1) the bounds are typically `sharper' than the classical VC-bounds; (2) the bounds may be generalized to non-i.i.d.\ data such as stationery $\beta$-mixing processes \citep{shalizi2011,mohri2009rademacher,mohri2010stability}. By contrast, the VC dimension measure requires an i.i.d.\ assumption on the predictors. Hence, the performance of any model in the given model class can be measured and regularized by VC-bounds and Rademacher-bounds.

\subsection{Summary of major ideas}

By generalizing a pattern learned from a training sample to another finite sample, cross-validation obtains the training error and test error for the model class. To put it in another way, through out-of-sample validation, cross-validation captures the empirical stability of the model by finding the difference between the training and test errors. On one hand, if both training and test samples are random and identically distributed, we would expect the information encoded in both samples to be similar, implying that training and test errors should be not very different; on the other hand, with different values of $n$, $K$ and different random splitting schemes, the difference between the training and test errors may vary due to the sampling and sub-sampling error. Moreover, since $T_q$, the absolute difference between the training and test errors in q\textsuperscript{th} round of validation, may be autocorrelated as a result of the sample splitting scheme, it may cause the problem that the empirical generalization ability, more importantly the order of generalization abilities among model classes, might not be very generalizable to other new-coming data sets. Thus, given the model class $\Lambda$, $n$ and tail property of the loss, to study the `real' stability of cross-validation with the presence of the autocorrelation and sampling/subsampling error, one method is to numerically approximate its $\left( \beta, \varpi \right)$-stability, for which the empirical average test error is not sufficient.

Guided by these ideas, we analyze the stability of a cross-validated model and build the connection between stability and generalization ability of the cross-validated model via the Rademacher complexity. The major probability tool we use is `blocking'. For the i.i.d.\ case, we approximate the performance of the cross-validated model via its performance on all blocks (folds); for the non-i.i.d.\ case, we use independent blocks to reduce the disruption of the cross-correlation in data generating process. To make sure the connection between stability and generalization ability is valid via Rademacher complexity for different kinds of losses, we encode the tail property of the loss into the Orlicz-$\Psi_\nu$ norm ($\nu \in [1, 2]$) and show that the result in our paper covers both sub-exponential and sub-gaussian cases.\footnote{ the Orlicz-$\Psi_\nu$ norm is actually a quasi-norm when $\nu \in (0,1)$. However, different concertration inequalities may also be derived with $\nu \in (0,1)$. See \citet{adamczak2008tail, lederer2014new} for example.} With $\left( \beta, \varpi \right)$-stability, blocking, convolution and Orlicz-$\Psi_\nu$ norms, we derive a class of bounds, referred to as one-round/convoluted Rademacher bounds. The convoluted Rademacher bounds quantify the stability (specifically the upper bound) of the average test error via $K$, $n$, the Orlicz-$\Psi_\nu$ norm of the loss and $\varpi$. As a result, given other parameters fixed, the optimal choice of $( K^*, b^*) \in \left\{ [0,n] \; , \; \Lambda \right\}$ is obtained at the minimum point of the convoluted Rademacher bound, at which the upper bound of the average test error is minimal. To put it in another way, the average test error is most stable at $( K^*, b^*)$.

The paper is organized as follows. In section~2, based on the one-round/average $\left( \beta, \varpi \right)$-stability, we list our assumptions and proceed to derive the one-round and convoluted Rademacher-bounds for cross-validation under traditional i.i.d. settings. In section~3, using the independent blocks, we construct the one-round and convoluted Rademacher bounds for cross-validation under the $\beta$-mixing scenario.


\section{Convoluted Rademacher-bounds for the cross-validated test error in i.i.d\ data}

\subsection{Notation and assumptions}

Our notation is defined as follows.
\begin{definition}
[Subsamples, training error, and test error]\label{def:notation}
\end{definition}

  \begin{enumerate}

    \item   Let $(y, \mathbf{x})$ denote a sample point from $F(y, \mathbf{x})$, where $F(y,\mathbf{x})$ is the joint distribution of $(y,\mathbf{x})$. Given a sample $(Y,X)$, the \textit{training set} is denoted as $(Y_{t},\,X_{t})\in\mathbb{R}^{n_t \times p}$ and the \textit{test set} as $(Y_{s},\,X_{s})\in\mathbb{R}^{n_s \times p}$. Let $\widetilde{n}=\min\{n_s,n_t\}$.

    \item   Let $\Lambda$ denote a model class. The \textit{loss function} for a model $b\in\Lambda$ is $Q \left( b, y_i,\mathbf{x}_i \right),\,i=1,\ldots,n$. The \textit{population risk} for $b\in\Lambda$ is $\mathcal{R} \left( b, Y, X \right) = \int Q\left( b, y,\mathbf{x} \right) \mathrm{d} F \left(y ,\mathbf{x} \right)$. The \textit{empirical risk} is $\mathcal{R}_{n} \left( b, Y, X \right) = \frac{1}{n}\;\sum_{i=1}^n \; Q \left( b, y_i, \mathbf{x}_i \right)$.

    \item   The \textit{training error}, for any $b \in \Lambda$, is $\mathcal{R}_{n_t} \left( b, Y_{t}, X_{t} \right)$. The \textit{test error}, for any $b \in \Lambda$, is $\mathcal{R}_{n_s} \left( b, Y_{s}, X_{s} \right)$. The population risk for $b$ is $\mathcal{R} \left( b, Y, X \right)$.

    \item For $K$-fold cross-validation, denote the training set and test set in the q\textsuperscript{th} round, respectively, as $(X_{t}^q,Y_{t}^q)$ and $(X_s^q,Y_s^q)$. As a result, the training / test error on the $q$\textsuperscript{th} training / test set is $\mathcal{R}_{n_t} \left(X_t^q,Y_t^q \right)$ and $\mathcal{R}_{n_s} \left( X_s^q,Y_s^q \right)$, respectively

    \item We define the Orlicz-$\Psi_\nu$ norm for an empirical process $Z$ as
      \[
        \left\Vert Z \right\Vert_{\Psi_\nu} := \inf_{u > 0} \left\{ u \; \left\vert \; \mathbb{E} \left[ \exp \left\{ \frac{\left\vert Z \right\vert^\nu } { u^\nu } \right\} \right] < 2 \right\}. \right.
      \]
      We also define the empirical process, $\forall b \in \Lambda, \; \forall q \in \left[1,K\right]$,
      \begin{align}
        U_q & := \sup_{b \in \Lambda} \; \left\vert
          \mathcal{R}_{n_s} \left( b, Y_{s}^q, X_{s}^q \right) - \mathcal{R}_{n_t} \left( b, Y_{t}^q, X_{t}^q \right)
          \right\vert \notag \\
        T_q & := U_q - \mathbb{E} \left[ U_q \right]. \notag
      \end{align}
  \end{enumerate}

Our main assumptions are listed as follows.

\bigskip
\noindent
\textbf{Assumptions}

\begin{enumerate}
    \item[\textbf{A1.}]   In the probability space $\left( \Omega, \mathcal{F}, P \right)$ for the loss function, we assume $\mathcal{F}$-measurability of the loss $Q(b, y, \mathbf{x})$, the population error $\mathcal{R}(b,Y,X)$, and the empirical error $\mathcal{R}_{n}(b, Y,X)$, for any $b\in\Lambda$ and any sample point $(y, \mathbf{x})$. Specifically, we assume that the Orlicz-$\Psi_\nu$ norm of all the loss processes are always well-defined for $\nu \geqslant 1$.
    \item[\textbf{A2.}]   All points in each fold for cross-validation are randomly partitioned. The data points of $(Y,X)$ are independently sampled from the population.
    \item[\textbf{A3.}]   \citep{vapnik1998statistical} We assume that, in the model class $\Lambda$, there exists a countable number of sub-classes,\footnote{The subclasses are equivalence classes w.r.t. the $\mathrm{RC}_{n/K}$.} denoted as $ \left\{ \mathcal{C}_i \; \vert \; i = 1, \ldots, I \right\}$, such that $ \cup_{i} \mathcal{C}_i = \Lambda$. We assume the existence of an admissible structure: a linear order exists within the set $ \left\{ \mathrm{RC}_{n/K} \left( \mathcal{C}_i \right) \; \vert \; i = 1, \ldots, I \right\}$.
    \item[\textbf{A4.}]   For all the empirical processes in this paper, the VC entropy in model class $\Lambda$, $H^{\Lambda} \left( \epsilon, n \right)$, satisfies $\lim_{n \rightarrow \infty} H^{\Lambda} \left( \epsilon, n \right) / n = 0,\; \forall \epsilon > 0$
    \item[\textbf{A5.}]   (\citep{jayan1985commutability} interchangability of $\sup \left[ \cdot \right]$ and $\mathbb{E} \left[ \cdot \right]$) ~ Let's denote $W_n \left( b \right) = \left( \mathcal{R}_n \left(b, X, Y \right) - \mathcal{R} \left(b, X, Y \right) \right)^2 $. Assume $\sup_{b \in \Lambda } \mathbb{E} \left[ W_n \left( b \right) \right] \leqslant \infty$. $ \forall \epsilon \geqslant 0$ and $\alpha, \delta \in \Lambda $, $\exists \upsilon \in \Lambda$ such that 
    \[
      \mathbb{E} \left[ \left( W_n \left( \upsilon \right) - W_n \left( \alpha \right) \vee W_n \left( \delta \right) \right)^- \right] \leqslant \epsilon.
    \]
\end{enumerate}

Several remarks apply to the assumptions. Assumption~A1 on the loss distribution formalizes the analysis of test errors. The Orlicz-$\Psi_\nu$ norm encodes the tail property of the loss distribution. If $\nu = 2$, the distribution is in the subgaussian family and large losses do not occur very often; if $\nu = 1$, the distribution is in the subexponential family and the tails are `heavier' than for the subgaussians. A2~needs to be clarified specifically. In this section, we focus on the independence case; we relax the independence restriction in the next section to allow for mixing processes. The admissible structure in assumption~A3 originates in the work of Vapnik in 1960s and rules out the scenario in which the complexity of models cannot be compared pairwise. The VC entropy assumption~A4 ensures the empirical risk of any model in the model class is consistent (see \citet{vapnik1998statistical}). Lastly, assumptions~A5 are known to be the most general condition for interchangability of $\sup \left[ \cdot \right]$ and $\mathbb{E} \left[ \cdot \right]$, which are well-known and frequently used in stochastic optimization, backwards stochastic differential equations and other financial math researches.\footnote{ This condition is proved by \citet{striebel2013optimal} and generalized by \citet{jayan1985commutability}. See \citet[p250]{elliott1982stochastic} for detail. Generally speaking, in mathematics $\mathbb{E} \left[ \sup_{b \in \Lambda} \left[ \cdot \right]\right]$ may not be measurable. As a result, \citet{jayan1985commutability} use the operator $\esssup \left[ \cdot \right]$ instead. However, if it is measurable, the $\esssup$ and $\sup$ are identical. For the application of this condition, see \citet{peng2004filtration} as an example.} Another method to partially solve the interchangability is inspired by \citet{vapnik1998statistical})\footnote{ this method is originally used by Vapnik (for the proof of the VC bounds for the heavy-tail loss) to control the ratio between $L_1$ norm and $L_p$ norm of the random variables with heavy tails, where $ 2 > p > 1$.}:
  \begin{enumerate}
    \item[\textbf{A5$'$.}]  Given the model class $\Lambda$, 
      \begin{align}
        \sup_{b \in \Lambda} \left\{ 
          \frac{ \mathbb{E} \left[ \sup \left( \mathcal{R}_n \left( b, X, Y \right) - \mathcal{R} \left( b, X, Y \right) \right)^2 \right] }
          { \sup \left\{ \mathbb{E} \left[ \left( \mathcal{R}_n \left( b, X, Y \right) - \mathcal{R} \left( b, X, Y \right) \right)^2 \right] \right\} } \right\}
          \leqslant \zeta \in \mathbb{R}^+.
      \end{align}
  \end{enumerate}
\noindent
Either A5 or A5$'$ could help us interchange the order of $\sup \left[ \cdot \right]$ and $\mathbb{E} \left[ \cdot \right]$. In this paper, for the mathematical elegance, we derive the theoretical results under A5; however,  a set of very similar results could also be derived under A5$'$, which is skipped here.

\subsection{Convoluted Rademacher-bounds for the test errors}

Firstly, we show in Theorem~\ref{thm:one_round_VC_bound} that, in each round of cross-validation, the one-round test error is bounded by a Rademacher-bound, which depends on $n$, $\mathrm{RC}$ and $\varpi$. To do that, we use the concentration inequality in Orlicz-$\Psi_\nu$ norms. The reasons we use that specific inequality is because: (1) it is an exponential concentration inequality, which means we can build a tighter bound than the non-exponential inequalities; (2) the Orlicz-$\Psi_\nu$ norm reveals the tail-behavior or concentration of the subexponential variables (and the heavy/flat/long-tail variables) in a more straightforward way than the Lebesgue-$p$ norm. It also applies to case where the $L_2$ norm of the random variable does not exist.
%
%
\begin{theorem}[Rademacher-bound of the one-round test error in cross-validation]
  In the q\textsuperscript{th} round of validation, under \textbf{A1} to \textbf{A5}, the following upper bound for the test error holds with probability at least $ 1 - \varpi \in \left( 0 , 1 \right]$, $\forall b \in \Lambda$.
  \noindent
  \begin{equation}
    \mathcal{R}_{n_s} \left( b, Y_s^q, X_s^q \right) \leqslant \mathcal{R}_{n_t}
      \left( b, Y_t^q, X_t^q \right) + 2 \cdot \mathrm{RC}_{n/K} \left( \Lambda \right) + \varsigma,
    \label{eq:VC_bound}
  \end{equation}
  \noindent
  where $\rho_j = \sup_{b \in \Lambda} \left\vert  \sum_{i = 1}^{ n/K } Q \left( b, y^{j,i}, \mathrm{x}^{j,i} \right) / ( n/K ) - \mathcal{R} \left( b, Y, X \right) \right\vert$, \, $\sigma^2$ is the variance proxy of $\rho_j$ if $\rho_j$ is subgaussian, \, $ \xi^2 : = \sigma^2 / \mathrm{var} \left[ \rho_j \right] $, $c$ is an absolute constant in the exponential inequality by \citet{Lecue09tool} \footnote{ The absolute constant $c$ is from Theorem 0.4 in \citet{Lecue09tool}, which is slightly modified from \citet{talagrand1994supremum}. Different papers have investigated the value of the absolute constant in the Talagrand's concerntration inequality, for example see \citet{taoconstant, massart2000constants}.}, and
  \[
    \varsigma =
      \left\{
        \begin{array}{ll}
          2 \cdot M \cdot \sqrt{ \frac{ \log \left( 1 / \varpi \right) } { n / K } },
          & \mbox{ if } \sup_{ b \in \Lambda } \left( Q \right) \leqslant M \\
          2 \cdot \xi \cdot \tilde{\sigma} \cdot \sqrt{ \frac{ \log \left( 1 / \varpi \right) } { n / K } }
          & \mbox{ if } \rho_j \mbox{ is subgaussian } \\
          \left\Vert \rho_j \right\Vert_{\Psi_1} \cdot \log \sqrt[c] {\left( 2 / \varpi \right) } ,
          & \mbox{ if } \rho_j \mbox{ is subexponential and } ~ 2\exp \left\{ -2 c \right\} > \varpi  >  0  \\
          \left\Vert \rho_j \right\Vert_{\Psi_1} \cdot \left( 2 \cdot \log \sqrt[c] { \left( 2 / \varpi \right) } \right)^{\frac{1}{2}} ,
          & \mbox{ if } \rho_j \mbox{ is subexponential and } ~ 2\exp \left\{ -2 c \right\} \leqslant \varpi \leqslant 1
        \end{array}\right.
  \]
\label{thm:one_round_VC_bound}
\end{theorem}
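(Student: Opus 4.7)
The plan is to decompose $U_q := \sup_{b \in \Lambda}\bigl|\mathcal{R}_{n_s}(b,Y_s^q,X_s^q) - \mathcal{R}_{n_t}(b,Y_t^q,X_t^q)\bigr|$ into an expectation controlled by the Rademacher complexity plus a concentration tail controlled by the Orlicz-$\Psi_\nu$ structure. First, I split the training set into its $K-1$ constituent folds (each of size $n/K$) and insert the population risk $\mathcal{R}(b,Y,X)$ via the triangle inequality. This gives $U_q \leq \rho_q + \sup_{b\in\Lambda} |\mathcal{R}_{n_t}(b,Y_t^q,X_t^q) - \mathcal{R}(b,Y,X)|$, where $\rho_q$ is precisely the quantity appearing in the theorem statement. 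Since the training risk is a uniform average of $K-1$ fold-level empirical risks, controlling each fold-level deviation yields the desired bound on the second term.

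Next I pass to expectations via a standard ghost-sample symmetrisation (legitimate under A2 for independence and A5 for interchanging $\sup$ and $\mathbb{E}$), which under the paper's normalisation of $\mathrm{RC}$ gives $\mathbb{E}[\rho_q] \leq \mathrm{RC}_{n/K}(\Lambda)$ and $\mathbb{E}[\sup_b |\mathcal{R}_{n_t}(b)-\mathcal{R}(b)|] \leq \mathrm{RC}_{n_t}(\Lambda)$. Rademacher complexity is non-increasing in sample size and $n_t = n(K-1)/K \geq n/K$, so the second expectation is majorised by $\mathrm{RC}_{n/K}(\Lambda)$, yielding $\mathbb{E}[U_q] \leq 2\,\mathrm{RC}_{n/K}(\Lambda)$. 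Assumption A4 enters tacitly to ensure uniform consistency of the empirical risks that is required by the symmetrisation step.

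It remains to bound the centred supremum $T_q = U_q - \mathbb{E}[U_q]$ with probability at least $1-\varpi$, which gives the four cases of $\varsigma$. (a) For bounded loss $Q \leq M$, I use McDiarmid's bounded-differences inequality: perturbing a single observation in a fold of size $n/K$ shifts $U_q$ by at most $\sim 2M/(n/K)$, producing the $M\sqrt{\log(1/\varpi)/(n/K)}$ rate. (b) For subgaussian $\rho_j$, a standard subgaussian Chernoff bound with variance proxy $\sigma^2$ yields the corresponding rate, with the correction factor $\xi = \sigma/\sqrt{\mathrm{var}[\rho_j]}$ accounting for the gap between proxy and true variance. (c)-(d) For subexponential $\rho_j$, I invoke the Talagrand-type Orlicz-$\Psi_1$ bound (Theorem 0.4 of \citet{Lecue09tool}), whose mixed subgaussian-subexponential form crosses over at deviation $t \sim \Vert\rho_j\Vert_{\Psi_1}$; inverting it in $\varpi$ gives the $\sqrt{\log(2/\varpi)}$-type term in the moderate-$\varpi$ regime and the linear $\log(2/\varpi)$-type term in the small-$\varpi$ regime, with the cutoff $\varpi = 2\exp(-2c)$ dictated by the absolute constant $c$ of the Talagrand inequality.

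Combining the two pieces gives $U_q \leq 2\,\mathrm{RC}_{n/K}(\Lambda) + \varsigma$ with probability at least $1-\varpi$, and since $\mathcal{R}_{n_s}(b,Y_s^q,X_s^q) - \mathcal{R}_{n_t}(b,Y_t^q,X_t^q) \leq U_q$ for every $b \in \Lambda$ the stated one-sided bound follows. I expect the main obstacle to be cases (c)-(d): carefully tracking the absolute constant $c$ across the Gaussian-tail and exponential-tail regimes of the Talagrand-Orlicz inequality, and verifying that the interchangeability supplied by A5 extends to the variance-type functional appearing inside that bound rather than only to the uniform expectation, is the delicate step.
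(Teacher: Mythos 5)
Your proposal is correct and follows essentially the same route as the paper: insert the population risk via the triangle inequality, reduce the training-set deviation to the fold-level suprema $\rho_j$, bound the expectation by $2\,\mathrm{RC}_{n/K}\left(\Lambda\right)$, and concentrate the fluctuation with a regime-dependent (bounded / subgaussian / Orlicz-$\Psi_1$) inequality. The only material difference is the bounded case, where you apply McDiarmid to $U_q$ directly while the paper treats the bounded $\rho_j$ as subgaussian scalars, applies a Chernoff bound to the weighted sum $\frac{1}{K-1}\sum_{j=1}^{K-1}\rho_j+\rho_K$, and then uses A5 to reduce the variance proxy to $M^2/(n/K)$ --- both yield the same $\sqrt{\log\left(1/\varpi\right)/(n/K)}$ rate.
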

%
%
\begin{proof}

To prove Theorem~\ref{thm:one_round_VC_bound}, we start from quantifying the following probability,
\noindent
\begin{equation}
  \mathrm{Pr} \left[ \sup_{b \in \Lambda} \left\vert \mathcal{R}_{n_s} \left( b, Y_s^q, X_s^q \right) - \mathcal{R}_{n_t} \left( b, Y_t^q, X_t^q \right) \right\vert > \epsilon \right],
  \mbox{ for given } \epsilon \in \mathbb{R}^+.
\end{equation}
\noindent
This probability may be simplified as follows,

\noindent
\begin{align}
  \phantom{=} & ~ \mathrm{Pr} \left[ \sup_{b \in \Lambda} \left\vert \mathcal{R}_{n_s} \left( b, Y_s^q, X_s^q \right) - \mathcal{R}_{n_t} \left( b, Y_t^q, X_t^q \right) \right\vert > \epsilon \right] \notag \\
  = & ~ \mathrm{Pr} \left[ \sup_{b \in \Lambda} \left\vert \mathcal{R}_{n_s} \left( b, Y_s^q, X_s^q \right) - \mathcal{R} \left( b, Y, X \right) + \mathcal{R} \left( b, Y, X \right) - \mathcal{R}_{n_t} \left( b, Y_t^q, X_t^q \right) \right\vert > \epsilon \right].
\label{eq:proof_1_1}
\end{align}

\noindent
Due to the convexity of the norm, eq.~(\ref{eq:proof_1_1}) implies that

\noindent
\begin{align}
  \phantom{=} & ~ \mathrm{Pr} \left[ \sup_{b \in \Lambda} \left\vert \mathcal{R}_{n_s} \left( b, Y_s^q, X_s^q \right) - \mathcal{R} \left( b, Y, X \right) + \mathcal{R} \left( b, Y, X \right) - \mathcal{R}_{n_t} \left( b, Y_t^q, X_t^q \right) \right\vert > \epsilon \right] \notag \\
  \leqslant & ~ \mathrm{Pr} \left[ \sup_{b \in \Lambda} \left\vert \mathcal{R}_{n_s} \left( b, Y_s^q, X_s^q \right) - \mathcal{R} \left( b, Y, X \right) \right\vert + \sup_{b \in \Lambda} \left\vert \mathcal{R} \left( b, Y, X \right) - \mathcal{R}_{n_t} \left( b, Y_t^q, X_t^q \right) \right\vert > \epsilon \right].
\label{eq:proof_1_2}
\end{align}

\noindent
If we further define
\begin{align}
  \Phi_{n_t} := & \sup_{b \in \Lambda} \left\vert \mathcal{R} \left( b, Y, X \right) - \mathcal{R}_{n_t} \left( b, Y_t^q, X_t^q \right) \right\vert, \\
  \Phi_{n_s} := & \sup_{b \in \Lambda} \left\vert \mathcal{R}_{n_s} \left( b, Y_s^q, X_s^q \right) - \mathcal{R} \left( b, Y, X \right) \right\vert,
\end{align}

\noindent
the union bound implies that for eq.~(\ref{eq:proof_1_2}) the following derivation holds

\noindent
\begin{align}
  \phantom{=} & ~ \mathrm{Pr} \left[ \sup_{b \in \Lambda} \left\vert \mathcal{R}_{n_s} \left( b, Y_s^q, X_s^q \right) - \mathcal{R} \left( b, Y, X \right) \right\vert + \sup_{b \in \Lambda} \left\vert \mathcal{R} \left( b, Y, X \right) - \mathcal{R}_{n_t} \left( b, Y_t^q, X_t^q \right)\right\vert > \epsilon \right] \notag \\
  = & ~ \mathrm{Pr} \left[ \Phi_{n_s} + \Phi_{n_t} > \epsilon \right]
\end{align}

\smallskip
\noindent
1. \emph{$\sup_{b\in\Lambda}Q\left(b\right)$ is bounded by $M$}

\smallskip
If we define the performance of the model class in each fold as
\[
  \rho_j = \sup_{b \in \Lambda} \left\vert \frac{ 1 } { n/K } \sum_{i = 1}^{ n/K } Q \left( b, y^{j,i}, \mathrm{x}^{j,i} \right) - \mathcal{R} \left( b, Y, X \right) \right\vert,
\]

\noindent
the following relation between $\Phi_{n_t}$ and $\rho_j$ holds,
\begin{align}
  \Phi_{n_t} & \leqslant  \frac{ 1 } { K - 1 } \sum_{j = 1}^{ K - 1 } \sup_{b \in \Lambda} \left\vert \frac{ 1 } { n/K } \sum_{i = 1}^{ n/K } Q \left( b, y^{j,i}, \mathrm{x}^{j,i} \right) - \mathcal{R} \left( b, Y, X \right) \right\vert   \\
  & = \frac{ 1 } { K - 1 } \sum_{q = 1}^{ K - 1 } \rho_j \, .
\end{align}

\noindent
If $ \sup_{ b \in \Lambda } Q \left( b \right) \leqslant M $, $\rho_j$ is also bounded by $M$, implying that $\rho_j$ is a subgaussian with variance proxy $\sigma$. Define $\thickbar{\epsilon} = \epsilon - 2 \cdot \mathbb{E} \left( \rho_j \right)$ and the fold used as the test data as $\rho_K$. We also denote the vector $\mathbf{w} = \left[ 1 / \left( K - 1 \right), \ldots, 1 / \left( K - 1 \right), 1 \right]$ as the weight and $\left\Vert \cdot \right\Vert_2$ as the $L^2$ norm. As a result, the Chernoff bounds imply that

\noindent
\begin{align}
  \mathrm{Pr} \left[ \Phi_{n_t} + \Phi_{n_s} > \epsilon \right] & \leqslant \mathrm{Pr} \left[ \frac{ 1 } { K - 1 } \sum_{ j = 1 }^{ K - 1 } \rho_j + \rho_K > \epsilon \right] \notag \\
  & = \mathrm{Pr} \left[ \frac{ 1 } { K - 1 } \sum_{j = 1}^{ K - 1 } \rho_j + \rho_K - 2 \cdot \mathbb{E} \left( \rho_j \right) > \epsilon - 2 \cdot \mathbb{E} \left( \rho_j \right) \right] \\
  & \leqslant \exp \left\{ - \frac{ \thickbar{\epsilon}^2 } { 2 \cdot \left\Vert \mathbf{w} \right\Vert_2^2 \cdot \sigma^2 } \right\} \\
  & =  \exp \left\{ - \frac{ \thickbar{\epsilon}^2 \cdot \left( K - 1 \right)} { 2 \cdot \sigma^2 \cdot K} \right\}
\end{align}

\noindent
Since $K \in \left[ 2, n \right]$,

\noindent
\begin{align}
  \mathrm{Pr} \left[ \sup_{b \in \Lambda} \left\{ \mathcal{R}_{n_s} \left( b, Y_s^q, X_s^q \right) - \mathcal{R}_{n_t} \left( b, Y_t^q, X_t^q \right) \right\} > \epsilon \right]
  & \leqslant \exp \left\{ -\frac{ \left( \epsilon - 2 \cdot \mathbb{E} \left[ \rho_j \right] \right)^2 \cdot \left( K - 1 \right) }{ 2 \cdot \sigma^2 \cdot K } \right\} \\
  & \leqslant \exp \left\{ -\frac{ \left( \epsilon - 2 \cdot \mathbb{E} \left[ \rho_j \right] \right)^2 }{ 4 \cdot \sigma^2 } \right\}
\end{align}

\noindent
If we set $ \varpi = \exp \left\{ - \left( \epsilon - 2 \cdot \mathbb{E} \left[ \rho_j \right] \right)^2 / \left( 4 \cdot \sigma^2 \right) \right\}$,

\noindent
\begin{align}
  \epsilon & = 2 \cdot \mathbb{E} \left[ \rho_j \right] + 2 \cdot \sigma \cdot \sqrt{ \log \left( \frac{ 1 } { \varpi } \right) } \\
  \phantom{ \epsilon } & \leqslant 2 \cdot \mathrm{RC}_{n/K} \left( \Lambda \right) + 2 \cdot \sigma \cdot \sqrt{ \log \left( \frac{ 1 } { \varpi } \right) }
\end{align}
\noindent
where $\mathrm{RC}_{n/K} \left( \Lambda \right) $ is the Rademacher complexity on each fold. Hence, $ \forall b \in \Lambda $, the following inequality holds with probability at least $ 1 - \varpi \in \left( 0 , 1  \right] $,

\noindent
\begin{equation}
  \mathcal{R}_{n_s} \left( b, Y_s^q, X_s^q \right) \leqslant \mathcal{R}_{n_t} \left( b, Y_t^q, X_t^q \right) + 2 \cdot \mathrm{RC}_{n/K} \left( \Lambda \right) + 2 \cdot \sigma \cdot \sqrt{ \log \left( \frac{ 1 } { \varpi } \right) }.
\end{equation}

\noindent
Denote $ \sigma^2 / \mathrm{var} \left[ \rho_j \right] = \xi^2 $. Based on assumption A5,\footnote{ If we use assumption A5$'$, an extra $\zeta \in \mathbb{R}^+$ will occur besides $M$. We rely on A5 in this paper, but similar results could be derived under A5$'$, which is straightforward and will not be specified here.} the supremum operator and expectation operator may be interchanged. Hence, the definition of $\rho_i$ implies that
\noindent
\begin{align}
  \sigma^2 \leqslant & ~ \xi^2 \cdot \mathbb{E} \left[ \sup_{b \in \Lambda} \left\vert \frac{1}{n/K} \sum_{i=1}^{n/K} \left[ Q \left( b, y^{j,i}, \mathbf{x}^{j,i} \right) - \mathcal{R} \left( b, Y, X \right) \right] \right\vert \right]^2 \\
  = & ~ \xi^2 \cdot \mathbb{E} \left[ \sup_{b \in \Lambda} \left( \frac{1}{n/K} \sum_{i=1}^{n/K} \left[ Q \left( b, y^{j,i}, \mathbf{x}^{j,i} \right) - \mathcal{R} \left( b, Y, X \right) \right] \right)^2 \right] \\
  = & ~ \xi^2 \cdot \sup_{b \in \Lambda} \left( \mathbb{E} \left[ \frac{1}{n/K} \sum_{i=1}^{n/K} \left[ Q \left( b, y^{j,i}, \mathbf{x}^{j,i} \right) - \mathcal{R} \left( b, Y, X \right) \right] \right]^2 \right)  \\
  = & ~ \xi^2 \cdot \frac{1}{n/K} \cdot \sup_{b \in \Lambda} \left\{ \mathrm{var} \left[ Q \left( b, y^{j,i}, \mathbf{x}^{j,i} \right) - \mathcal{R} \left( b, Y, X \right) \right] \right\}
  \label{eqn:last_step_bounded} \\
  \leqslant & ~  \frac{ M^2 }{n/K} .
\end{align}
\noindent
As a result, given the probability $1-\varpi$,

\noindent
\begin{equation}
  \mathcal{R}_{n_s} \left( b, Y_s^q, X_s^q \right) \leqslant \mathcal{R}_{n_t} \left( b, Y_t^q, X_t^q \right) + 2 \cdot \mathrm{RC}_{n/K} \left( \Lambda \right) + 2 \cdot M \cdot \sqrt{ \frac{ \log \left( 1 / \varpi \right) } { n / K } }.
\end{equation}

\noindent
The RHS and LHS will be identical if $\lim_{n \rightarrow \infty} \, K / n = 0$.

\bigskip

\noindent
2. \emph{$ \left\Vert \rho_j \right\Vert_{\Psi_2} $ is finite}

\smallskip
If the Orlicz-$\Psi_2$ norm of $\rho_j$ is finite, $\rho_j$ is subgaussian. By defining the variance proxy of $ \sup_{ b \in \Lambda } \left\{ \mathrm{var} \left[ Q \left( b, y^{j,i}, \mathbf{x}^{j,i} \right) - \mathcal{R} \left( b, Y, X \right) \right] \right\}$ as $\tilde{\sigma}^2$, eq.~(\ref{eqn:last_step_bounded}) implies that
\noindent
\begin{align}
  \mathcal{R}_{n_s} \left( b, Y_s^q, X_s^q \right) \leqslant \mathcal{R}_{n_t} \left( b, Y_t^q, X_t^q \right) + 2 \cdot \mathrm{RC}_{n/K} \left( \Lambda \right) + 2 \cdot \xi \cdot \tilde{\sigma} \cdot \sqrt{ \frac{ \log \left( 1 / \varpi \right) } { n / K } },  \notag \\
  \forall 1-\varpi \in \left[ 0,1 \right)
\end{align}
\bigskip


\noindent
3. \emph{$\left\Vert \rho_j \right\Vert_{\Psi_1}$ is finite}

\smallskip
If $\left\Vert \rho_j \right\Vert_{\Psi_1}$ is finite, $\rho_j$ is subexopential. As a result, a Bernstein-type inequality holds as follows
\noindent
\begin{align}
  \mathrm{Pr} \left[ \Phi_{n_t} + \Phi_{n_s} > \epsilon \right] & \leqslant \mathrm{Pr} \left[ \frac{ 1 } { K - 1 } \sum_{ j = 1 }^{ K - 1 } \rho_j + \rho_K > \epsilon \right] \\
  & = \mathrm{Pr} \left[ \frac{ 1 } { K - 1 } \sum_{j = 1}^{ K - 1 } \rho_j + \rho_K - 2 \cdot \mathbb{E} \left( \rho_j \right) > \epsilon - 2 \cdot \mathbb{E} \left( \rho_j \right) \right] \\
  & \leqslant 2 \cdot \exp \left\{ - c \cdot \min \left(
    \frac{ \thickbar{\epsilon}^2 } { \left\Vert \rho_j \right\Vert_{ \Psi_1}^2 \cdot \left\Vert \mathbf{w} \right\Vert_2^2 },
    \frac{ \thickbar{\epsilon}} { \left\Vert \rho_j \right\Vert_{\Psi_1} \cdot \left\Vert \mathbf{w} \right\Vert_\infty }
    \right) \right\}  \\
  & \leqslant 2 \cdot \exp \left\{ - c \cdot \min \left(
    \frac{ \left( \epsilon - 2 \cdot \mathbb{E} \left[ \rho_j \right] \right)^2 } { 2 \cdot \left\Vert \rho_j \right\Vert_{ \Psi_1}^2 },
    \frac{ \epsilon -2 \cdot \mathbb{E} \left[ \rho_j \right] } { \left\Vert \rho_j \right\Vert_{\Psi_1} }
    \right) \right\}
\end{align}
\noindent
If we set $ \left( \epsilon -2 \cdot \mathbb{E} \left[ \rho_j \right] \right) / \left( 2 \cdot \left\Vert \rho_j \right\Vert_{ \Psi_1} \right) = \tau $,
\noindent
\begin{align}
  \min \left(
  \frac{ \left( \epsilon - 2 \cdot \mathbb{E} \left[ \rho_j \right] \right)^2 } { 2 \cdot \left\Vert \rho_j \right\Vert_{ \Psi_1}^2 },
  \frac{ \epsilon -2 \cdot \mathbb{E} \left[ \rho_j \right] } { \left\Vert \rho_j \right\Vert_{\Psi_1} }
  \right)
  = \left\{
  \begin{array}{lr}
    \left( \epsilon - 2 \cdot \mathbb{E} \left[ \rho_j \right] \right) / \left\Vert \rho_j \right\Vert_{\Psi_1} ,   & \text{if } \tau > 1  \\
    \left( \epsilon - 2 \cdot \mathbb{E} \left[ \rho_j \right] \right)^2 / \left( \sqrt{2} \cdot  \left\Vert \rho_j \right\Vert_{\Psi_1} \right)^2 ,   & \text{if } \tau \leqslant 1
  \end{array}\right.
\end{align}

\noindent
Hence,

\noindent
\begin{align}
  \mathrm{Pr} \left[ \sup_{b \in \Lambda} \left\vert \mathcal{R}_{n_s} \left( b, X_s^q, Y_s^q \right) - \mathcal{R}_{n_t} \left( b, X_s^q, Y_s^q \right) \right\vert > \epsilon \right]
  \leqslant  \left\{
    \begin{array}{lr}
      2 \exp \left\{ - c \cdot \frac{ \epsilon - 2 \cdot \mathbb{E} \left[ \rho_j \right] } { \left\Vert \rho_j \right\Vert_{\Psi_1} }\right\} ,   & \text{if } \tau > 1  \\
      2 \exp \left\{ - c \cdot \frac{ \left( \epsilon - 2 \cdot \mathbb{E} \left[ \rho_j \right] \right)^2 } { 2 \cdot \left\Vert \rho_j \right\Vert_{\Psi_1}^2 } \right\} ,   & \text{if } \tau \leqslant 1
    \end{array}\right.
  \label{eqn:proof_heavy_tail}
\end{align}

\noindent
If we set $ \varpi $ as the RHS of eq.~(\ref{eqn:proof_heavy_tail}), i.e.
\begin{align}
  \varpi = \left\{
    \begin{array}{lr}
      2 \exp \left\{ - c \cdot \frac{ \epsilon - 2 \cdot \mathbb{E} \left[ \rho_j \right] } { \left\Vert \rho_j \right\Vert_{\Psi_1} } \right\} ,   & \text{if} ~~ \epsilon - 2 \cdot \mathbb{E} \left[ \rho_j \right] >  2 \cdot \left\Vert \rho_j \right\Vert_{\Psi_1} > 0 \\
      2 \exp \left\{ - c \cdot \frac{ \left( \epsilon - 2 \cdot \mathbb{E} \left[ \rho_j \right] \right)^2 } { 2 \cdot \left\Vert \rho_j \right\Vert_{\Psi_1}^2 } \right\} ,   & \text{if} ~~ 0 < \epsilon - 2 \cdot \mathbb{E} \left[ \rho_j \right] \leqslant  2 \cdot \left\Vert \rho_j \right\Vert_{\Psi_1}
    \end{array}\right.
\end{align}
$\epsilon$ may be expressed as a function of $\varpi$,
\noindent
\begin{align}
  \epsilon= &
    \begin{cases}
      2 \mathbb{E} \left[ \rho_j \right] + \left\Vert \rho_j \right\Vert_{\Psi_1} \cdot \log \sqrt[c] {\left( 2 / \varpi \right) } , & \,\mathrm{if} ~~ \epsilon - 2 \cdot \mathbb{E} \left[ \rho_j \right] >  2 \cdot \left\Vert \rho_j \right\Vert_{\Psi_1} > 0 \\
      2 \mathbb{E} \left[ \rho_j \right] + \left\Vert \rho_j \right\Vert_{\Psi_1} \cdot \left( 2 \cdot \log \sqrt[c] { \left( 2 / \varpi \right) } \right)^{\frac{1}{2}} , & \mathrm{if} ~~ 0 < \epsilon - 2 \cdot \mathbb{E} \left[ \rho_j \right] \leqslant  2 \cdot \left\Vert \rho_j \right\Vert_{\Psi_1}
    \end{cases}
\end{align}
\noindent
Hence, $\forall b \in \Lambda$, the following inequality holds with probability at least $ 1 - \varpi \in \left( 0 , 1 \right]$
\noindent
\begin{align}
  \mathcal{R}_{n_s} \left( b, Y_s^q, X_s^q \right) \leqslant \mathcal{R}_{n_t} \left( b, Y_t^q, X_t^q \right)+ 2 \cdot \mathrm{RC}_{n/K} \left( \Lambda \right) +  \left\Vert \rho_j \right\Vert_{\Psi_1} \cdot \log \sqrt[c] {\left( 2 / \varpi \right) } , \notag \\
  \mathrm{if} ~~ 2 \exp \left\{ -2 c \right\} > \varpi  >  0 \\
  \mathcal{R}_{n_s} \left( b, Y_s^q, X_s^q \right) \leqslant \mathcal{R}_{n_t} \left( b, Y_t^q, X_t^q \right)+ 2 \cdot \mathrm{RC}_{n/K} \left( \Lambda \right) + \left\Vert \rho_j \right\Vert_{\Psi_1} \cdot \left( 2 \cdot \log \sqrt[c] { \left( 2 / \varpi \right) } \right)^{\frac{1}{2}} , \notag \\
  \mathrm{if} ~~ 2 \exp \left\{ -2 c \right\} \leqslant \varpi \leqslant 1
\end{align}
\smallskip
\end{proof}
%
%
Based on Rademacher complexity, eq.~(\ref{eq:VC_bound}) establishes the upper bound of the one-round test error based on the one-round training error for risks with different tails. The upper bound of the one-round test error is directly derived by finding the upper bound for $\varpi$ of the one-round $\left( \beta, \varpi \right)$-stability. Hence, the one-round upper bound measures the probabilistic stability of the one-round test error. Due to the Olicz norm, eq.~(\ref{eq:VC_bound}) will also work when Lebesgue-2 norm of the loss does not exist. As shown in eq.~(\ref{eq:VC_bound}), the stability of the one-round test error is determined by $n$, $K$, and the $\left\Vert \cdot \right\Vert_{\Psi_\nu}$ norm, where $\nu \in \left[ 1 , 2 \right]$. The smaller $\nu$, the heavier the tails of the loss, and the more volatile the one-round test error. Furthermore, in one-round validation, the values of $n$ and $K$ directly influence how similar are the test and training sets, or the parity between test and training sets. Leave-one-out cross-validation, for example, uses one point to validate the result learned from the other\ $n-1$ points. Given the value of $n$, leave-one-out cross-validation does not maintain the parity very well, and hence it results in unnecessary instability in one-round validation: $\mathrm{RC}_{1}$ and the last term of eq.~(\ref{eq:VC_bound}) would be extremely high, even though $\mathcal{R}_{n_t}$ may be close to its population value.\footnote{Eq.~(\ref{eq:VC_bound}) is not restricted to $K$-fold cross-validation. It may also be used to measure one-round stability of the test error in Monte Carlo cross-validation, stratified cross-validation, and repeated $K$-fold cross-validation.} Hence, it is beneficial to balance the sample sizes of the training and test sets. It is also worth nothing that since the one-round stability is defined using the supremum norm, a similar lower bound may be derived as well. Due to the similarity of the techniques, we skip the derivation of the lower bound.

The next step is to convolute the test error and training error in each round and establish the convoluted Rademacher-bounds for cross-validation. If the empirical processes $\mathcal{R}_{n_t}$ and $\mathcal{R}_{n_s}$ in all rounds of cross-validation are independent, we can directly apply concentration inequalities, such as the Hoeffding or Bernstein inequalities, and approximate the probability of the convoluted Rademacher-bounds. However, with large $K$, it is well-known that $\left\{ \mathcal{R}_{n_t} - \mathcal{R}_{n_s} \right\}$ may be an empirical process with a strong and persistent autocovariance, implying that the straightforward i.i.d. concentration paradigm cannot be applied here. In order to establish the convoluted PAC-bound for cross-validation based on one-round bounds, we first generalize the Chebyshev inequality to dependent processes in the following lemma.
%
%
\begin{lemma}
Assume $\left\{ X_i \right\}_{i=1}^{n}$ is sampled from a stationery and mean-square ergodic process and that the autocovariance function $\mathrm{cov} \left(X_{i+l},\; X_{i} \right) := \gamma_l < \infty,\, \forall l \in \mathbf{R}$. The following inequality holds for any $\varpi \in \left[0,1\right)$,

  \begin{equation}
    \mathrm{Pr} \left( \left\vert \overline{X} - \mathbb{E} \left( X \right) \right\vert \leqslant \epsilon \right)
    \geqslant 1 - \frac{ \gamma_0 }{ \epsilon^2 n} \cdot \left( 1 + 2 V_n \left[ X \right] \right).
    \label{Cheby_ineq}
  \end{equation}

\noindent
where $V_n \left[ X \right] := \sum_{l=1}^{n-1} \left\vert \gamma_l\right\vert$ and $\varpi = \gamma_0 / ( n \cdot \epsilon^2 ) \cdot \left( 1 + 2 V_n \left[ X \right] \right)$.
\label{lem:Cheby_ineq}
\end{lemma}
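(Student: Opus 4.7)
The plan is to obtain the bound by a direct application of Chebyshev's inequality to the sample mean $\overline{X}$, reducing the task to controlling $\mathrm{var}(\overline{X})$ under dependence. By Chebyshev,
\[
  \mathrm{Pr}\left(\left\vert \overline{X} - \mathbb{E}[X] \right\vert > \epsilon\right) \;\leqslant\; \frac{\mathrm{var}(\overline{X})}{\epsilon^2},
\]
so proving \eqref{Cheby_ineq} amounts to showing $\mathrm{var}(\overline{X}) \leqslant \gamma_0(1 + 2V_n[X])/n$ (up to the convention chosen in the definition of $V_n$). The independence-based proof of Chebyshev does not apply here, and the substitute that carries the argument through is stationarity together with summability of the autocovariance function.

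Next I would expand the variance of the sample mean via bilinearity of covariance:
\[
  \mathrm{var}(\overline{X}) = \frac{1}{n^2}\sum_{i=1}^{n}\sum_{j=1}^{n}\mathrm{cov}(X_i,X_j) = \frac{1}{n^2}\sum_{i,j}\gamma_{|i-j|},
\]
where the last equality uses stationarity. Separating the diagonal ($i=j$, contributing $n\gamma_0$) from the off-diagonal part and collecting pairs at each lag $l \in \{1,\ldots,n-1\}$ (there are $2(n-l)$ such ordered pairs) gives
\[
  \mathrm{var}(\overline{X}) = \frac{\gamma_0}{n} + \frac{2}{n^2}\sum_{l=1}^{n-1}(n-l)\,\gamma_l.
\]

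To finish, I would apply the triangle inequality to pass from $\gamma_l$ to $|\gamma_l|$, then use the crude bound $(n-l)\leqslant n$ on every summand to obtain
\[
  \mathrm{var}(\overline{X}) \;\leqslant\; \frac{\gamma_0}{n} + \frac{2}{n}\sum_{l=1}^{n-1}|\gamma_l| \;=\; \frac{1}{n}\bigl(\gamma_0 + 2V_n[X]\bigr),
\]
and plug this into Chebyshev to recover \eqref{Cheby_ineq}, then rearrange to the complementary-event form by taking $1$ minus both sides.

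The main obstacle is not analytical but conceptual: the classical Chebyshev bound relies on $\mathrm{var}(\overline{X}) = \gamma_0/n$, which fails without independence, and one has to justify that the autocovariance tail is tame enough for the bound to be informative. This is exactly where the mean-square ergodicity assumption does the work, since it forces $V_n[X] = o(n)$ and hence the right-hand side of \eqref{Cheby_ineq} tends to $1$ as $n \to \infty$; otherwise the bound, while still formally valid, would be vacuous. A secondary subtlety is the looseness introduced by replacing $(n-l)$ with $n$ inside the sum — a sharper Cesàro-type argument retaining the weight $(1 - l/n)$ would give a tighter constant, but the stated form is chosen for algebraic simplicity when it is later composed with the one-round Rademacher bound of Theorem~\ref{thm:one_round_VC_bound} to build the convoluted bound over $K$ folds.
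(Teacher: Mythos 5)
Your proposal is correct and follows essentially the same route as the paper's own proof: Chebyshev applied to $\overline{X}$, expansion of $\mathrm{var}(\overline{X})$ by lag under stationarity, the triangle inequality on the autocovariances, and the crude bound $(n-l)\leqslant n$ (the paper writes the off-diagonal part as a nested sum $\sum_{j}\sum_{l=1}^{n-j}$ and bounds it by $(n-1)\sum_l|\gamma_l|/n^2$, which is the identical computation). Your final form $\mathrm{var}(\overline{X})\leqslant\bigl(\gamma_0+2\sum_l|\gamma_l|\bigr)/n$ agrees with the paper's displayed bound under the normalization $\gamma_0\cdot V_n[X]:=\sum_{l=1}^{n-1}|\gamma_l|$ used inside the paper's proof (rather than the unnormalized definition in the lemma statement), which is exactly the convention caveat you already flagged.
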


For the completeness, the proof of the last lemma is included in the appendix. A few comments apply to Lemma~\ref{lem:Cheby_ineq}. In eq.~(\ref{Cheby_ineq}), $\lim_{ n \rightarrow \infty} V_n \left[ X \right]$ is the autocovariance time for the process $\left\{ X_i \right\}$ in the population. As long as $\lim_{ n \rightarrow \infty} V_n \left[ X \right] / n \leqslant \infty$, eq.~(\ref{Cheby_ineq}) holds for the process $\left\{ X_i \right\}$. As the final preparation for bounds of cross-validation, let's first define the one-round Rademacher complexity for the cross validation.
%
%
\begin{definition}[One-round Rademacher complexity for cross-validation]
  Given the sample size $n$ and the number of folds $K$, the test size $n_s = n/K$ and the training size $n_t = 1 - n_s$. Let's denote $\left( Y_t^q, X_t^q \right)$ as the training set in $q$\textsuperscript{th} round and $\left( Y_s^q, X_s^q \right)$ as the test set in $q$\textsuperscript{th} round. We denote the \emph{one-round Rademacher complexity} of the model class $\Lambda$, $\mathrm{RC} \left( \Lambda, n, K \right)$ as
  \[
    \mathrm{RC} \left( \Lambda, n, K \right) = 
      \frac{1}{2} \mathrm{RC}_{n_s} \left( \Lambda \right)
      + \frac{1}{2} \mathrm{RC}_{n_t} \left( \Lambda \right)
  \]
\label{defn:one-round_RC}
\end{definition}
\noindent
A short comment need to address for definition~\ref{defn:one-round_RC}. As shown in the researches of learning theory, the Rademacher complexity is expected to bound the maximal deviation between the sample risk and population risk. Likwise, on average the one-round Rademacher complexity bounds the maximal difference between two sample risks, which is quite handy in the proof of the convoluted bounds for cross-validation in both i.i.d\ and non-i.i.d\ cases. Definition~\ref{defn:one-round_RC} completes the underpinnings to the proof of the convoluted PAC-bounds. Based on definition~\ref{defn:one-round_RC}, lemma~\ref{lem:Cheby_ineq} and theorem~\ref{thm:convoluted_VC_bound}, a convoluted Rademacher-bound may be established for the average test error in $K$-fold cross-validation.\footnote{Because it is simply a repetition of ordinary $K$-fold cross-validation, Theorem~\ref{thm:convoluted_VC_bound} may also be used to measure the performance of repeated $K$-fold cross-validation.}
%
%
\begin{theorem}[Convoluted Rademacher-bounds for the average test error in cross-validation]
Assume \textbf{A1} - \textbf{A5} are satisfied. Furthermore, assume Lemma~\ref{lem:Cheby_ineq} and Theorem~\ref{thm:one_round_VC_bound} hold.

\smallskip
  1. \textit{The $ \left\Vert \rho_j \right\Vert_{\Psi_2} \leqslant \infty $ case.}
  If \;$\left\Vert \rho_j \right\Vert_{\Psi_2} \leqslant \infty$, $\forall b \in \Lambda$, the following upper bound holds with probability at least $ \left( 1 - \kappa \right)^+$,

    \begin{align}
      \frac{ 1 }{ K } \sum^{ K }_{ q = 1 } \mathcal{R}_{n_s} \left( b, Y_s^q, X_s^q \right)
        \leqslant \frac{ 1 } { K } \sum^{ K }_{ q = 1 }
        \mathcal{R}_{n_t} \left( b, Y_t^q, X_t^q \right) 
          + 2 \cdot \mathrm{RC} \left( \Lambda, n, K \right)
          + \varsigma
      \label{eq1:convoluted_VC_bound}
    \end{align}

    \noindent
    where
    \[
      \kappa = \frac{ 1 + 2 V_K \left[ T_q \right] } { 2 \cdot \xi^2 \cdot K \cdot \log \left( 1 / \varpi \right) },
    \]
    $\xi$ and $\varsigma$ are parameters inherited from Theorem~\ref{thm:one_round_VC_bound}

\noindent
  2. \textit{The $ \left\Vert \rho_j \right\Vert_{\Psi_1} \leqslant \infty $ case}
  If \;$\left\Vert \rho_j \right\Vert_{\Psi_1} \leqslant \infty$ and $T_q$ still has a finite variance, the following upper bound holds with probability at least $ \left( 1 - \kappa \right)^+ $,

    \begin{align}
      \frac{ 1 }{ K } \sum^{ K }_{ q = 1 } \mathcal{R}_{n_s} \left( b, Y_s^q, X_s^q \right)
        \leqslant \frac{ 1 } { K } \sum^{ K }_{ q = 1 }
        \mathcal{R}_{n_t} \left( b, Y_t^q, X_t^q \right) 
          + 2 \cdot \mathrm{RC} \left( \Lambda, n, K \right)
          + \varsigma
      \label{eq2:convoluted_VC_bound}
    \end{align}
    where

    \[
      \kappa =
      \left\{
        \begin{array}{ll}
          \frac{ 6 \cdot \left( 1 + 2 V_K \left[ T_q \right] \right) } { K \cdot \log \sqrt[ c ] { 2 / \varpi } } ,
          & \mbox{ if } \varpi \in \left[ 2 \exp \left\{ -2c \right\}, 1 \right]. \\
          \frac{ 12 \cdot \left( 1 + 2 V_K \left[ T_q \right] \right) } { K \cdot \left( \log \sqrt[ c ] { 2 / \varpi } \right)^2 } ,
          & \mbox{ if } \varpi \in \left(0 , 2 \exp \left\{ -2c \right\} \right).
        \end{array}\right.
    \]
    $\varsigma$ is defined in the previous equation.
\label{thm:convoluted_VC_bound}
\end{theorem}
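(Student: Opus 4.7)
The plan is to reduce the average deviation across folds to an average of the one-round suprema $U_q$, split that average into its mean and its centered fluctuation $T_q = U_q - \mathbb{E}[U_q]$, control the mean by Theorem~\ref{thm:one_round_VC_bound} and the fluctuation by the dependent-process Chebyshev inequality of Lemma~\ref{lem:Cheby_ineq}. First, the triangle inequality applied inside the supremum gives
\[
 \Bigl|\tfrac{1}{K}\sum_{q=1}^{K}\mathcal{R}_{n_s}(b,Y_s^q,X_s^q)-\tfrac{1}{K}\sum_{q=1}^{K}\mathcal{R}_{n_t}(b,Y_t^q,X_t^q)\Bigr|\;\leqslant\;\tfrac{1}{K}\sum_{q=1}^{K}U_q\;=\;\tfrac{1}{K}\sum_{q=1}^{K}T_q\;+\;\mathbb{E}[U_q],
\]
where stationarity of the partitioning (A2) makes $\mathbb{E}[U_q]$ common across $q$.

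Next I would bound $\mathbb{E}[U_q]$ by recycling the one-round argument of Theorem~\ref{thm:one_round_VC_bound}: splitting $U_q$ symmetrically between the test fold and the remaining $K-1$ training folds, the standard symmetrisation step yields $\mathbb{E}[U_q]\leqslant \tfrac{1}{2}\mathrm{RC}_{n_s}(\Lambda)+\tfrac{1}{2}\mathrm{RC}_{n_t}(\Lambda)\cdot 2=2\mathrm{RC}(\Lambda,n,K)$ in the convention of Definition~\ref{defn:one-round_RC}, plus a residual of the same form as the $\varsigma$ in Theorem~\ref{thm:one_round_VC_bound} (arising from $\mathbb{E}\rho_j$ in the subexponential branches and from the variance-proxy bookkeeping in the subgaussian branches). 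This gives the deterministic $2\mathrm{RC}(\Lambda,n,K)+\varsigma$ piece that appears on the right-hand side of both~(\ref{eq1:convoluted_VC_bound}) and~(\ref{eq2:convoluted_VC_bound}).

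For the random piece $\tfrac{1}{K}\sum T_q$, since $\{T_q\}_{q=1}^{K}$ is centred and stationary (the folds are exchangeable under A2 but induced correlation across rounds is encoded in the autocovariances $\gamma_\ell$), Lemma~\ref{lem:Cheby_ineq} applies with $n\leftarrow K$ and gives, for any threshold $\epsilon>0$,
\[
 \mathrm{Pr}\Bigl(\bigl|\tfrac{1}{K}\sum_{q=1}^{K}T_q\bigr|\leqslant \epsilon\Bigr)\;\geqslant\;1-\tfrac{\gamma_0\,(1+2V_K[T_q])}{K\epsilon^2}.
\]
The final ingredient is the choice of $\epsilon$: I select it so that $\epsilon$ matches exactly the scale of the tail term $\varsigma$ already produced in the one-round bound, and then bound $\gamma_0=\mathrm{var}(U_q)$ by the variance proxy $\tilde{\sigma}^2$ (in the $\Psi_2$ branch) or $\|\rho_j\|_{\Psi_1}^2$ (in the $\Psi_1$ branch), invoking A5 as in Theorem~\ref{thm:one_round_VC_bound} to interchange supremum and expectation. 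Substituting these into the bound above yields $\kappa=(1+2V_K[T_q])/(2\xi^2 K\log(1/\varpi))$ in the subgaussian case, and the two-regime expression of $\kappa$ in the subexponential case, according to whether $\varpi$ lies above or below the threshold $2\exp(-2c)$ (the same regime split as in Theorem~\ref{thm:one_round_VC_bound}).

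The main obstacle is the subexponential branch: one has to carry the Bernstein-type two-regime dependence of $\epsilon$ on $\varpi$ through the generalised Chebyshev step, keep track of constants ($c$, the factor $\sqrt{2}$ between the two regimes, and the factor $2$ absorbed into the $6$ and $12$ in the statement), and verify that $T_q$ still has a finite second moment so that Lemma~\ref{lem:Cheby_ineq} applies at all (which is why the theorem states ``$T_q$ still has a finite variance'' as an extra hypothesis in the $\Psi_1$ case). Once those are handled, a union bound over the mean-piece and the fluctuation-piece is unnecessary because the mean bound is deterministic; the only randomness is in $\tfrac{1}{K}\sum T_q$, which is exactly why the overall confidence level is $(1-\kappa)^+$ rather than a product.
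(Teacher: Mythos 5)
Your proposal follows essentially the same route as the paper: decompose $\frac{1}{K}\sum_q U_q$ into $\mathbb{E}[U_q]+\frac{1}{K}\sum_q T_q$, bound $\mathbb{E}[U_q]\leqslant 2\,\mathrm{RC}(\Lambda,n,K)$, apply Lemma~\ref{lem:Cheby_ineq} with $n\leftarrow K$ and threshold $\varsigma$ taken from Theorem~\ref{thm:one_round_VC_bound}, and bound $\gamma_0[T_q]$ branch-by-branch (via the variance cancellation in the $\Psi_2$ case and via $\gamma_0[T_q]/\Vert\rho_j\Vert_{\Psi_1}^2\leqslant 8+4/(K-1)$, the paper's Lemma~\ref{lemma:a1}, in the $\Psi_1$ case) to obtain the stated $\kappa$. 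The only slip is presentational: $\varsigma$ enters once, as the Chebyshev threshold on the fluctuation term, not additionally as a residual from bounding $\mathbb{E}[U_q]$ — your later choice $\epsilon=\varsigma$ is the correct accounting, so nothing substantive is affected.
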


\begin{proof} 

\noindent
1. \textit{The $ \left\Vert \rho_j \right\Vert_{\Psi_2} \leqslant \infty $ case.}
\noindent
Since we have already defined
\begin{align}
  T_q  & = U_q - \mathbb{E} \left[ U_q \right] \notag \\
  & = \sup_{b \in \Lambda} \; \left\vert \mathcal{R}_{n_s} \left( b, Y_{s}^q, X_{s}^q \right) - \mathcal{R}_{n_t} \left( b, Y_{t}^q, X_{t}^q \right) \right\vert \\
  & \phantom{=} - \mathbb{E} \left[ \sup_{b \in \Lambda} \; \left\vert \mathcal{R}_{n_s} \left( b, Y_{s}^q, X_{s}^q \right) - \mathcal{R}_{n_t} \left( b, Y_{t}^q, X_{t}^q \right) \right\vert \right],
\end{align}

\noindent
Lemma~\ref{lem:Cheby_ineq} implies that
\begin{align}
  \mathrm{Pr} \left\{ \frac{1}{K} \sum^{K}_{q=1} T_q \geqslant \varsigma \right\}
  \leqslant
  \frac{ \gamma_0 \left[ T_q \right] }{ \varsigma^2 K} \cdot \left( 1 + 2 V_K \left[ T_q \right] \right)
\end{align}

\noindent
Based on Theorem~\ref{thm:one_round_VC_bound}, we let $\varsigma = 2 \cdot \xi \cdot \tilde{\sigma} \cdot \sqrt{ \frac{ \log \left( 1/\varpi \right) } { n / K } }$ when $\left\Vert \rho_j \right\Vert_{\Psi_2}$ is well defined. Since we can interchange $\sup \left\{ \cdot \right\}$ and $\mathbb{E} \left[ \cdot \right]$, the definition of $\gamma_0$ implies that
\begin{align}
  \gamma_0 \left[ T_q \right] & = \mathbb{E} \left[ U_q - \mathbb{E} \left[ U_q \right] \right]^2 \notag \\
  & \leqslant \mathbb{E} \left[
    \sup_{ b \in \Lambda} \; \left\vert
    \mathcal{R}_{n_s} \left( b, Y_s^q, X_s^q \right) - \mathcal{R}_{n_t} \left( b, Y_t^q, X_t^q \right)
    ~ \right\vert  \right]^2 \\
  & = \mathbb{E} \left[
    \sup_{ b \in \Lambda} \;  \left\{ \left( ~
    \mathcal{R}_{n_s} \left( b, Y_s^q, X_s^q \right) - \mathcal{R}_{n_t} \left( b, Y_t^q, X_t^q \right)
    ~ \right)^2 \right\} \right] \\
  & = \sup_{ b \in \Lambda} \left\{
    \mathbb{E} \left[
    \mathcal{R}_{n_s} \left( b, Y_s^q, X_s^q \right) - \mathcal{R}_{n_t} \left( b, Y_t^q, X_t^q \right)
    ~ \right]^2  \right\} \\
  & = \sup_{ b \in \Lambda} \left\{ \mathrm{var} \left[ ~ \mathcal{R}_{n_s} \left( b, Y_s^q, X_s^q \right) - \mathcal{R}_{n_t} \left( b, Y_t^q, X_t^q \right) ~ \right] \right\}.
\end{align}
\noindent
Since

\begin{align}
  \phantom{=} & ~ \mathrm{var} \left[ ~ \mathcal{R}_{n_s} \left( b, Y_s^q, X_s^q \right) - \mathcal{R}_{n_t} \left( b, Y_t^q, X_t^q \right) ~ \right] \notag \\
  = & ~ \mathrm{var} \left[ ~ \mathcal{R}_{n_s} \left( b, Y_s^q, X_s^q \right) - \mathcal{R} \left( b, Y, X \right) + \mathcal{R} \left( b, Y, X \right) - \mathcal{R}_{n_t} \left( b, Y_t^q, X_t^q \right) ~ \right] \\
  = & ~ \mathrm{var} \left[ \mathcal{R}_{n_s} \left( b, Y_s^q, X_s^q \right) - \mathcal{R} \left( b, Y, X \right) \right] + \mathrm{var} \left[ \mathcal{R} \left( b, Y, X \right) - \mathcal{R}_{n_t} \left( b, Y_t^q, X_t^q \right) \right],
\end{align}

\noindent
we can show that
\begin{align}
  \gamma_0 \left[ T_q \right] & \leqslant \sup_{ b \in \Lambda} \left\{ 2 \cdot \mathrm{var} \left[ \mathcal{R}_{n_s} \left( b, Y_s^q, X_s^q \right) - \mathcal{R} \left( b, Y, X \right) \right] \right\} \\
  & = \frac{2}{ n/K } \cdot \sup_{ b \in \Lambda} \left\{ \mathrm{var} \left[ Q \left( b, y^{i,j}, \mathbf{x}^{i,j} \right) - \mathcal{R} \left( b, Y, X \right) \right] \right\}
\end{align}

\noindent
Based on the definition of $\varsigma$,
\begin{align}
  \phantom{=} & ~
    \frac{ \gamma_0 \left[ T_q \right] }{ \varsigma^2 K} \cdot \left( 1 + 2 V_K \left[ T_q \right] \right)
  \notag \\
  \leqslant & ~
  \frac{ 2 / \left( n/K \right) \cdot \sup_{ b \in \Lambda} \left\{ \mathrm{var} \left[ Q \left( b, y^{i,j}, \mathbf{x}^{i,j} \right) - \mathcal{R} \left( b, Y, X \right) \right] \right\} \cdot \left( 1 + 2 V_K \left[ T_q \right] \right) }
  { \left( 4 \cdot \xi^2 \cdot K \right) / \left( n/K \right) \cdot \sup_{ b \in \Lambda} \left\{ \mathrm{var} \left[ Q \left( b, y^{i,j}, \mathbf{x}^{i,j} \right) - \mathcal{R} \left( b, Y, X \right) \right] \right\} \cdot \log \left( 1 / \varpi \right)} \\
  = & ~
  \frac{ 1 + 2 V_K \left[ T_q \right] } { 2 \cdot \xi^2 \cdot K \cdot \log \left( 1 / \varpi \right)}.
\end{align}

\noindent
Since $\mathbb{E} \left[ U_q \right] = \mathbb{E} \left[ \sup_{b \in \Lambda} \left\vert \mathcal{R}_{n_s} \left( b, Y_s^q, X_s^q \right) - \mathcal{R}_{n_t} \left( b, Y_t^q, X_t^q \right)\right\vert \right] \leqslant 2 \cdot \mathrm{RC} \left( \Lambda, n, K \right)$, $\forall b \in \Lambda$ the following equations hold,
\begin{align}
  \phantom{=} & \mathrm{Pr} \left\{ \frac{1}{K} \sum^{K}_{q=1} T_q \leqslant \varsigma \right\} \notag \\
  = & \mathrm{Pr} \left\{ \frac{ 1 }{ K } \sum^{ K }_{ q = 1 } \mathcal{R}_{n_s} \left( b, Y_s^q, X_s^q \right)
    \leqslant \frac{ 1 } { K } \sum^{ K }_{ q = 1 }
      \mathcal{R}_{n_t} \left( b, Y_t^q, X_t^q \right) 
      + \mathbb{E} \left[ U_q \right]
      + \varsigma \right\} \\
  = & \mathrm{Pr} \left\{ \frac{ 1 }{ K } \sum^{ K }_{ q = 1 } \mathcal{R}_{n_s} \left( b, Y_s^q, X_s^q \right)
    \leqslant \frac{ 1 } { K } \sum^{ K }_{ q = 1 }
      \mathcal{R}_{n_t} \left( b, Y_t^q, X_t^q \right) 
      + 2 \cdot \mathrm{RC} \left( \Lambda, n, K \right)
      + \varsigma \right\} \\
  \geqslant & \left( 1 - \frac{ 1 + 2 V_K \left[ T_q \right] } { 2 \cdot \xi^2 \cdot K \cdot \log \left( 1 / \varpi \right) } \right)^+.
\end{align}

\noindent
This completes the proof.\footnote{ Specifically, to make sure $1 - \left( 1 + 2 V_K \left[ T_q \right] \right) / \left( 2 \cdot \xi^2 \cdot K \cdot \log \left( 1 / \varpi \right) \right)$ is between $0$ and $1$, we need $ \left( 1 + 2 V_K \left[ T_q \right] \right) / \left( 2 \cdot \xi^2 \cdot K \cdot \log \left( 1 / \varpi \right) \right) \leqslant 1 $, which implies that $ 0 \leqslant \varpi \leqslant \exp \left\{ - \left( 1 + 2 V_K \left[ T_q \right] \right) / \left( 2 \cdot \xi^2 \cdot K \right) \right\}$.}
\bigskip


\noindent
2. \textit{The $ \left\Vert \rho_j \right\Vert_{\Psi_1} \leqslant \infty $ case}

We assume $ \left\Vert \rho_j \right\Vert_{\Psi_1}$ and $\mathrm{var} \left[ T_q \right]$ are well defined. If $\varpi \in \left[ 2 \exp \left\{ -2c \right\}, 1 \right]$, based on the definition of $\varsigma$ and the fact that 
\begin{align}
  \frac{ \gamma_0 \left[ T_q \right] }{ \varsigma^2 K} \cdot \left( 1 + 2 V_K \left[ T_q \right] \right)
  = & ~
  \frac{ 1 + 2 V_K \left[ T_q \right] } { K \cdot 2 \cdot \log \sqrt[ c ] { 2 / \varpi } } \cdot
    \frac{\gamma_0 \left[ T_q \right]}{\left\Vert \rho_j \right\Vert_{\Psi_1}^2}
\label{eq:p72}
\end{align}
\noindent
Since $\gamma_{0} \left[ T_q \right] / \left\Vert \rho_j \right\Vert_{\Psi_1}^2 \leqslant 8 + 4/(K-1)$, \footnote{ See lemma~\ref{lemma:a1} for detail}
\begin{align}
  \frac{ \gamma_0 \left[ T_q \right] }{ \varsigma^2 K} \cdot \left( 1 + 2 V_K \left[ T_q \right] \right)
  \leqslant & ~
  \frac{ 1 + 2 V_K \left[ T_q \right] } { K \cdot 2 \cdot \log \sqrt[ c ] { 2 / \varpi } } \cdot
    \left( 8 + \frac{4}{K-1} \right)  \\
  \leqslant & ~
  \frac{ 6 \cdot \left( 1 + 2 V_K \left[ T_q \right] \right) } { K \cdot \log \sqrt[ c ] { 2 / \varpi } }.
\end{align}
Likewise, if $\varpi \in \left(0 , 2 \exp \left\{ -2c \right\} \right)$,
\begin{align}
  \frac{ \gamma_0 \left[ T_q \right] }{ \varsigma^2 K} \cdot \left( 1 + 2 V_K \left[ T_q \right] \right)
  \leqslant & ~
  \frac{ 12 \cdot \left( 1 + 2 V_K \left[ T_q \right] \right) } { K \cdot \left( \log \sqrt[ c ] { 2 / \varpi } \right)^2 }.
\end{align}
\noindent
As a result, if we set
\begin{equation}
    \kappa =
      \left\{
        \begin{array}{ll}
          \frac{ 6 \cdot \left( 1 + 2 V_K \left[ T_q \right] \right) } { K \cdot \log \sqrt[ c ] { 2 / \varpi } } ,
          & \mbox{ if } \varpi \in \left[ 2 \exp \left\{ -2c \right\}, 1 \right]. \\
          \frac{ 12 \cdot \left( 1 + 2 V_K \left[ T_q \right] \right) } { K \cdot \left( \log \sqrt[ c ] { 2 / \varpi } \right)^2 } ,
          & \mbox{ if } \varpi \in \left(0 , 2 \exp \left\{ -2c \right\} \right).
        \end{array}\right.
\end{equation}
\begin{align}
  \mathrm{Pr} \left\{ \frac{ 1 }{ K } \sum^{ K }_{ q = 1 } \mathcal{R}_{n_s} \left( b, Y_s^q, X_s^q \right)
     \leqslant \frac{ 1 } { K } \sum^{ K }_{ q = 1 }
    \mathcal{R}_{n_t} \left( b, Y_t^q, X_t^q \right) + 2 \cdot \mathrm{RC} \left( \Lambda \right)
    + \varsigma \right\} \notag \\
    \geqslant \left( 1 - \kappa \right)^+,
\end{align}
\end{proof}

Similar to Theorem~\ref{thm:one_round_VC_bound}, the convoluted Rademacher-bounds in Theorem~\ref{thm:convoluted_VC_bound} measure the stability for the average test error in $K$ rounds of cross-validation. In a similar vein to Theorem~\ref{thm:one_round_VC_bound}, Theorem~\ref{thm:convoluted_VC_bound} reveals the encoded influence of $n$, $K$ and the tail heaviness of the loss distribution on the stability of the average test error. The new problem cross-validation brings, which does not occur with one-round validation, is that the training errors in each round are correlated at a constant level; moreover, the correlation gets larger as $K$ gets larger. This symptom occurs due to the overlapping of training sets for different rounds, which ultimately stems from the splitting scheme employed on the data. The autocorrelation among training errors from different rounds further weakens the stability of the evaluation result for cross-validation: the optimal model that cross-validation selects, which is ultimately trained with $n_t$ points, might behave very different in a new-coming sample of size $n_s$, if $K$ is unnecessarily large. The large magnitude of autocorrelation, brought up by a large $K$, will also make $1 - \kappa$, the probability approximation for eq.~(\ref{eq1:convoluted_VC_bound}) and~(\ref{eq2:convoluted_VC_bound}), less than or equal to zero, which makes the probabilistic approximation in Theorem~\ref{thm:convoluted_VC_bound} more difficult in finite samples. As a result, cross-validation is more complicated and $1 - \kappa$ depends on the correlation as well as on $K$. Hence, for given values of $n$ and the Orlicz-$\Psi_\nu$ norm of the loss, the sampling variation and correlation among training errors from different rounds makes the average test error very volatile. Thus, minimizing the average test error (the cross-sample mean) via tuning $K$ is difficult and potentially unfruitful.

Alternatively, from the perspective of minimax, the performance of cross-validation may be tuned by finding the minimum point on the convoluted Rademacher bounds parametrized by $K$ and $b$. Since the convoluted Rademacher bounds stand for the stability of the cross-validation under different values of $K$ and different $b \in \Lambda$, the best stability is acquired at the minimax-optimal pair of $ \left( K, b \right)$, denoted as $ \left( K^*, b^* \right)$. Thus, minimizing the convoluted Rademacher bounds guarantees the model selected by cross-validation at $K = K^*$ is the most stable across different samples sized $n_s$.

\section{Convoluted Rademacher-bounds for cross-validated test errors in the non-i.i.d. case}

In this section, we show that the results on constructing the i.i.d.\ convoluted Rademacher-bounds may be generalized to the non i.i.d.\ case. As an illustration, we consider $\beta$-mixing stationery processes. To ensure that the stability results for cross-validation are generalizable to other samples then, roughly speaking, the test and training sets must be independent and drawn from the sample population. However, if the data-generating process is dependent across time or space and the sample is a time series or spatially correlated, independence between the test set and training set will fail, potentially compromising the stability of the model evaluation results in cross-validation. As a result, the techniques in section~2 do not directly generalize to non-i.i.d.\ data. To solve the problem of dependence between training and test sets and construct convoluted Rademacher-bounds for non-i.i.d.\ data, we introduce the technique of `independent blocks' \citep{bernstein1927extension, yu1994rates}.
\noindent
\begin{figure}
  \centering
  \subfloat[\label{fig:IB1} original data]
  {\includegraphics[width=0.42\paperwidth]{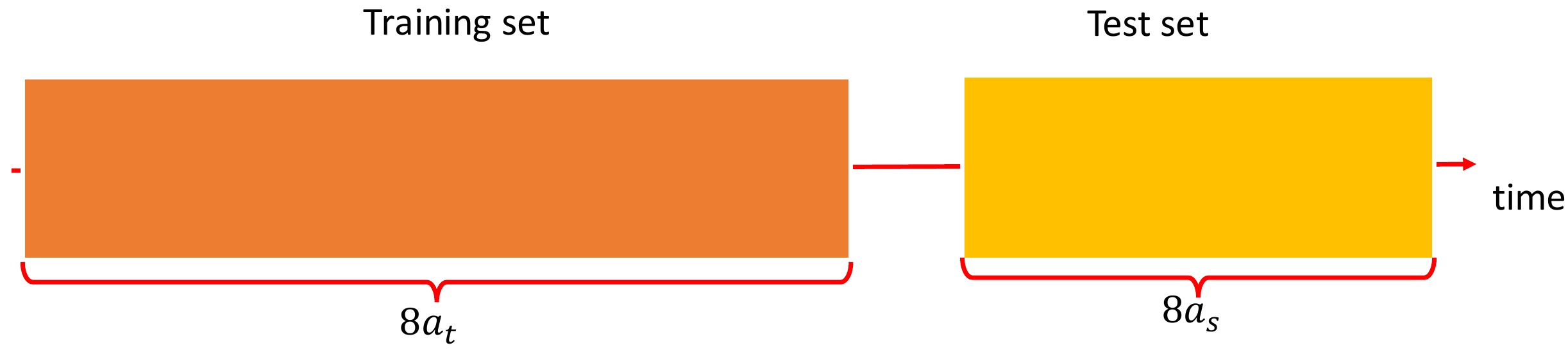}}

  \subfloat[\label{fig:IB2} blocking on original data]
  {\includegraphics[width=0.42\paperwidth]{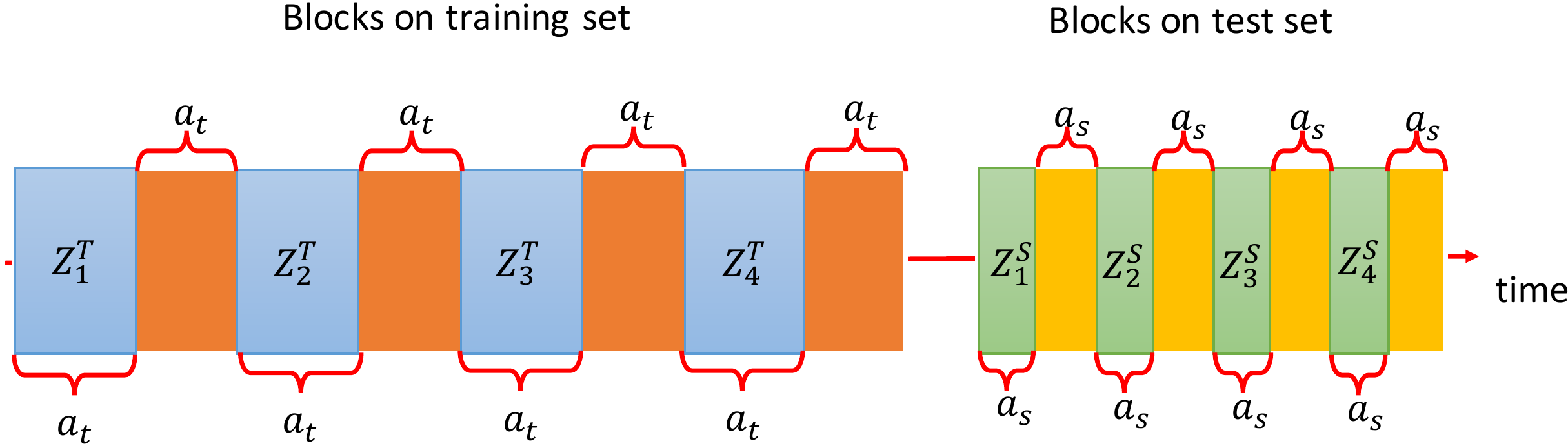}}

  \subfloat[\label{fig:IB3} independent blocks]
  {\includegraphics[width=0.42\paperwidth]{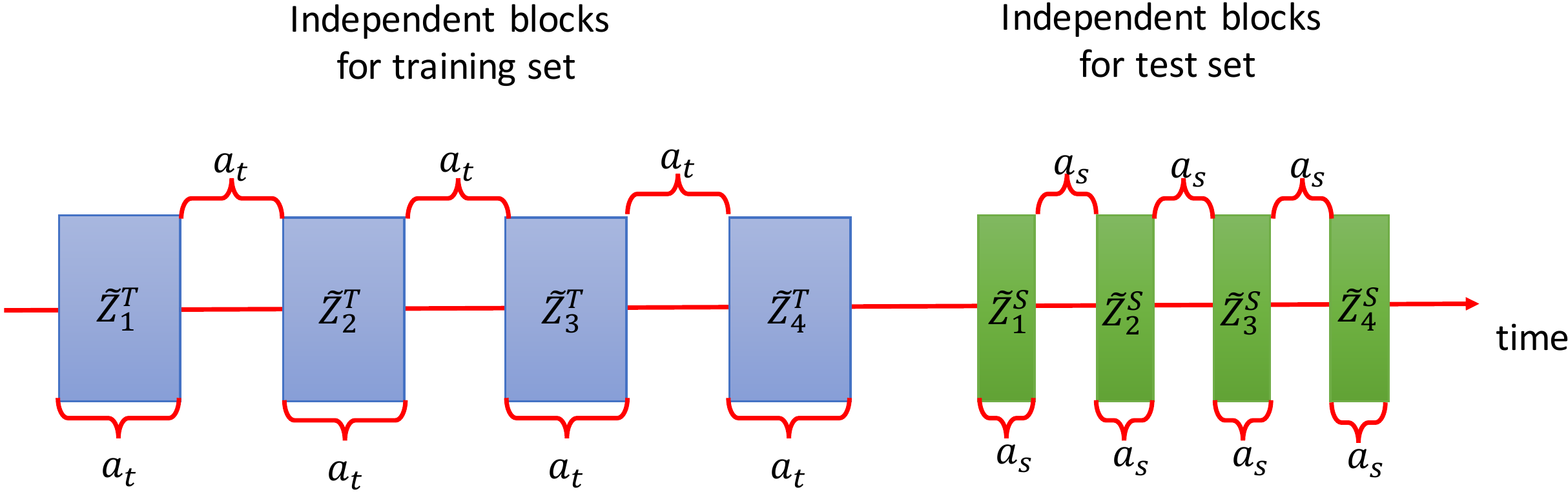}}

  \caption{illustration of independent blocks techinique where $\mu = 4$}
  \label{fig:IB}
\end{figure}

As shown in Figure~\ref{fig:IB}, the essence of the independent blocks technique is approximation. Precisely speaking, the technique consists of first splitting a sequence of training data, for example $S^T$, into two subsequences $S^T_0$ and $S^T_1$, each made of $\mu$ blocks of $a_t$ consecutive points. Likewise, the test data $S^S$ is also split into $S^S_0$ and $S^S_1$, each made of $\mu$ blocks of $a_s$ consecutive points. Given a sequence $S^T = \left(z^T_1, z^T_2, \ldots, z_{n_t}\right)$ with $n_t = 2 a_t \mu$ and $S^S = \left(z^S_1, z^S_2, \ldots, z_{n_s}\right)$ with $n_s = 2 a_s \mu$. $S^T_0$ and $S^S_0$ are defined as follows
\noindent
\begin{eqnarray}
  S^T_0 = & \left( Z^T_1, Z^T_2, \ldots, Z^T_{\mu}\right), & \mbox{ where } Z^T_i = \left( z^T_{(2i-1)+1}, \ldots , z^T_{(2i-1)+a}\right) \\
  S^S_0 = & \left( Z^S_1, Z^S_2, \ldots, Z^S_{\mu}\right), & \mbox{ where } Z^S_i = \left( z^S_{(2i-1)+1}, \ldots , z^S_{(2i-1)+a}\right)
\end{eqnarray}
\noindent
If $\left\{z_i\right\}$ is $\beta$-mixing and the mixing coefficient decays rapidly (exponentially or algebraically), large $a_t$ and $a_s$ make each block in $S^T_0$ or $S^S_0$ `almost' independent from each other while each block is still drawn from the same distribution as $\left\{z_i\right\}$ in the population. Next, we create two new sequences of blocks: $\widetilde{S}^T = \left( \widetilde{Z}^T_1, \ldots, \widetilde{Z}^T_\mu \right)$ and $\widetilde{S}^S = \left( \widetilde{Z}^S_1, \ldots, \widetilde{Z}^S_\mu \right)$. The key feature of $\widetilde{S}^T$ is that, unlike in $S^T_0$,  $\widetilde{Z}^T_i$ and $\widetilde{Z}^T_j$ are, for any $i,j$, strictly independent. Also, $\widetilde{Z}^T_i$ is still identically distributed as $Z^T_i$ for any $i$. Hence, the technique creates a sequence of independent, equally-sized blocks, to which standard i.i.d.\ techniques may be applied. Likewise, all blocks in $\widetilde{S}^S$ are independent and $\widetilde{Z}^S_i$ is identically distributed as $Z^S_i$ for any $i$. Also, $\widetilde{Z}^T_i$ is independent with $\widetilde{Z}^S_j$, for any $i$ and $j$. This ensures the independent blocks for the training data and test data are mutually independent. Since the original blocks, $S^T_0$ and $S^S_0$, are `very similar' to their independent blocks, $\widetilde{S}^T_0$ and $\widetilde{S}^S_0$, we can approximate the performance of cross-validation on $\left(S^T_0\;,\;S^S_0\right)$ by its performance on $\left(\widetilde{S}^T_0\;,\;\widetilde{S}^S_0\right)$. The following theorem by \citet{yu1994rates} illustrates that the independent blocks and original blocks are similar.


\begin{theorem}[\citet{yu1994rates}]
Let $\mu > 1$ and suppose that $\Lambda$ is a model class bounded by $M \geqslant 0$, defined on the blocks $\left( S^T_0\;,\;S^S_0 \right)$. Assume that $h$ is real-valued and measurable in the product probability space $\left(\Pi_{i=1}^{\mu} \Omega_i, \Pi_{i=1}^{\mu} \mathcal{F}_i \right)$. Then, for any  $S^T_0$ and $S^S_0$ drawn from a stationery $\beta$-mixing process,
  \begin{eqnarray}
    \left\vert \mathbb{E}_{S^T_0} \left[ h \right] - \mathbb{E}_{\widetilde{S}^T} \left[ h \right]\right\vert
    & \leqslant & \left( \mu - 1 \right) M \beta_{a_t} \\
    \left\vert \mathbb{E}_{S^S_0} \left[ h \right] - \mathbb{E}_{\widetilde{S}^S} \left[ h \right]\right\vert
    & \leqslant & \left( \mu - 1 \right) M \beta_{a_s}
  \end{eqnarray}
where $\mathbb{E}_{S^T_0}$ and $\mathbb{E}_{S^S_0}$ is the expectation w.r.t.\ $S^T_0$ and $S^S_0$; $\mathbb{E}_{\widetilde{S}^T}$ and $\mathbb{E}_{\widetilde{S}^S}$ is the expectation w.r.t.\ $\widetilde{S}^T$ and $\widetilde{S}^S$.
\label{thm:Yu94}
\end{theorem}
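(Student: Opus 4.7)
The plan is to prove the training-set inequality by a telescoping argument that interpolates between the true joint distribution of the training blocks and their independent counterpart, bounding each telescoped gap by a single application of the $\beta$-mixing property. The test-set inequality follows verbatim with $a_t$ replaced by $a_s$, so I focus on the training side.

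First I would define intermediate measures $\{\mathbb{Q}_k\}_{k=0}^{\mu}$ on the $\mu$-fold product of block spaces: under $\mathbb{Q}_k$ the blocks $(Z^T_1,\ldots,Z^T_k)$ follow their true mixing joint law induced by $S^T_0$, while $(Z^T_{k+1},\ldots,Z^T_\mu)$ are drawn independently of each other and of the first $k$ blocks, each with its correct marginal. By construction $\mathbb{Q}_\mu$ is the law of $S^T_0$ and $\mathbb{Q}_1=\mathbb{Q}_0$ coincides with the law of $\widetilde{S}^T$, since a single block already carries its own marginal. A telescoping expansion then gives
\begin{equation}
\mathbb{E}_{S^T_0}[h]-\mathbb{E}_{\widetilde{S}^T}[h]
= \sum_{k=1}^{\mu-1}\bigl(\mathbb{E}_{\mathbb{Q}_{k+1}}[h]-\mathbb{E}_{\mathbb{Q}_k}[h]\bigr).
\end{equation}

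Second, each summand is controlled by $M\beta_{a_t}$. The measures $\mathbb{Q}_{k+1}$ and $\mathbb{Q}_k$ agree on the first $k$ blocks and on blocks $k+2,\ldots,\mu$, and differ only in whether $Z^T_{k+1}$ is independent of $(Z^T_1,\ldots,Z^T_k)$ (under $\mathbb{Q}_k$) or tied to them through the original dependence (under $\mathbb{Q}_{k+1}$). Because the construction of Figure~\ref{fig:IB} places a guard block of $a_t$ consecutive points between block $k$ and block $k+1$, the defining property of the $\beta$-mixing coefficient yields
\begin{equation}
\bigl\|\mathcal{L}(Z^T_1,\ldots,Z^T_{k+1})-\mathcal{L}(Z^T_1,\ldots,Z^T_k)\otimes\mathcal{L}(Z^T_{k+1})\bigr\|_{\mathrm{TV}}\leqslant \beta_{a_t}.
\end{equation}
Marginalising out the independent tail blocks, whose law is identical under $\mathbb{Q}_{k+1}$ and $\mathbb{Q}_k$, gives $\|\mathbb{Q}_{k+1}-\mathbb{Q}_k\|_{\mathrm{TV}}\leqslant \beta_{a_t}$, and the uniform bound $|h|\leqslant M$ upgrades this to $|\mathbb{E}_{\mathbb{Q}_{k+1}}[h]-\mathbb{E}_{\mathbb{Q}_k}[h]|\leqslant M\beta_{a_t}$.

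Summing the $\mu-1$ telescoped terms delivers the claimed inequality, and the test-set version follows by the same route with $a_s$ in place of $a_t$. I expect the main obstacle to be the clean identification of $\beta_{a_t}$ with the total-variation distance between the true past-and-next-block law and the associated product-of-marginals coupling at each telescoping step; once that identification is extracted from the definition of $\beta$-mixing, the remainder is pure measure-theoretic bookkeeping.
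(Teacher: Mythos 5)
This statement is quoted from \citet{yu1994rates} and the paper supplies no proof of its own, so there is nothing in the text to compare against; your telescoping argument is, however, exactly the standard proof of Yu's Lemma 4.1: interpolate with hybrid measures that make the blocks independent one at a time, bound each swap by the $\beta$-mixing coefficient at the gap length, and multiply by the sup bound $M$ on $h$. The argument is correct as sketched; the only point to nail down is the one you already flag, namely that the normalization of $\beta_{a}$ (as the total variation of the signed measure versus half of it, and whether $|\mathbb{E}_P[h]-\mathbb{E}_Q[h]|\leqslant M\|P-Q\|_{\mathrm{TV}}$ or $2M\|P-Q\|_{\mathrm{TV}}$) must match Yu's convention so that the constant comes out as $(\mu-1)M\beta_{a}$ rather than $2(\mu-1)M\beta_{a}$.
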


\noindent
Theorem~\ref{thm:Yu94} is the key result that allows us to approximate the performance of cross-validation with independent blocks. Based on Theorem~\ref{thm:Yu94} and the McDiarmid inequality, we may derive the Rademacher bound for the one-round test error in cross-validation as follows.


\begin{theorem}
Assume that the test and training data are both sampled from a stationery $\beta$-mixing process. Assume also that the loss function $Q$ is measurable in space $\left(\Omega, \mathcal{F}\right)$, space $\left(\Omega_i, \mathcal{F}_i\right),\;\forall i$, and product space $\left( \Pi_{i=1}^{\mu} \Omega_i, \Pi_{i=1}^{\mu} \mathcal{F}_i \right)$. Denote $\varpi' = \varpi - \left( \mu - 1 \right) \left[ \beta_{a_t} + \beta_{a_s} \right]$. If $\left( \mu - 1 \right) \left[ \beta_{a_t} + \beta_{a_s} \right] < 1$, the following Rademacher bound holds with probability at least $ 1 - \varpi $, $\forall \varpi \in \left( \; \left( \mu - 1 \right) \left[ \beta_{a_t} + \beta_{a_s} \right] \; , \; 1 \; \right]$,
  \begin{align}
    \mathcal{R}_{n_s} \left( b, X_s, Y_s \right) 
    \leqslant 
    \mathcal{R}_{n_t} \left( b, X_t, Y_t \right) +  M \cdot \sqrt{ \frac{ \log \left( 4 / \varpi' \right) } { 2 \mu } }  + 2 \cdot \mathrm{RC}_{\widetilde{S}^S_0} \left( \Lambda \right)
  \label{eqn:one-round_Rademacher_CV}
  \end{align}
where $\mathrm{RC}_{\widetilde{S}^S_0} \left( \Lambda \right)$ is the Rademacher bound of the model class $\Lambda$ on the block $\widetilde{S}^S_0$.
\label{thm:one_round_RC_bound}
\end{theorem}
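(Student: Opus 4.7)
The plan is to mirror the strategy of Theorem~\ref{thm:one_round_VC_bound} (triangle inequality + concentration + Rademacher symmetrization), but to replace every i.i.d.\ concentration step by its $\beta$-mixing counterpart via the independent-blocks construction. First I would decompose
\[
\sup_{b \in \Lambda}\bigl|\mathcal{R}_{n_s}(b, X_s, Y_s) - \mathcal{R}_{n_t}(b, X_t, Y_t)\bigr|
\leqslant \Phi_{n_s} + \Phi_{n_t},
\]
where $\Phi_{n_s} = \sup_{b}|\mathcal{R}_{n_s}(b) - \mathcal{R}(b)|$ and $\Phi_{n_t} = \sup_{b}|\mathcal{R}(b) - \mathcal{R}_{n_t}(b)|$, exactly as in eq.~(\ref{eq:proof_1_2}). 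These two suprema depend only on the even-indexed block sequences $S_0^S$ and $S_0^T$ respectively. By Theorem~\ref{thm:Yu94} applied to the bounded, measurable events $\{\Phi_{n_s} > t_s\}$ and $\{\Phi_{n_t} > t_t\}$, I can swap each underlying $\beta$-mixing sample for the independent-block sample $\widetilde{S}^S$ (resp.\ $\widetilde{S}^T$) at a cost of at most $(\mu-1)\beta_{a_s}$ and $(\mu-1)\beta_{a_t}$ in probability. Summing these slacks is exactly what produces the shift $\varpi \mapsto \varpi' = \varpi - (\mu-1)[\beta_{a_t}+\beta_{a_s}]$ in the statement; the assumption $(\mu-1)[\beta_{a_t}+\beta_{a_s}] < 1$ keeps $\varpi'$ positive.

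Second, on the independent-block samples the blocks $\widetilde{Z}_i$ are i.i.d., so I would apply McDiarmid's bounded-differences inequality to each of $\Phi_{n_s}(\widetilde{S}^S)$ and $\Phi_{n_t}(\widetilde{S}^T)$. Since $|Q|\leqslant M$, swapping a single block (which represents a $1/\mu$ fraction of the data feeding the empirical mean) shifts the corresponding empirical risk by at most $M/\mu$. McDiarmid then yields
\[
\Pr\bigl\{\Phi(\widetilde{S}) - \mathbb{E}[\Phi(\widetilde{S})] > t\bigr\}
\leqslant \exp\bigl(-2\mu t^2 / M^2\bigr),
\]
and a union bound over the two sides, each at level $\varpi'/2$, gives the additive deviation term $M\sqrt{\log(4/\varpi')/(2\mu)}$ (the $4$ is $2\cdot 2$ from the union bound). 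Third, I would bound the expectations $\mathbb{E}[\Phi(\widetilde{S}^S)]$ and $\mathbb{E}[\Phi(\widetilde{S}^T)]$ by the standard symmetrization argument, performed at the block level (each $\widetilde{Z}_i$ receives a single Rademacher sign $\sigma_i$), which replaces the expected supremum by the block-indexed Rademacher complexity $\mathrm{RC}_{\widetilde{S}^S_0}(\Lambda)$; collecting both sides accounts for the factor $2$ in front of $\mathrm{RC}_{\widetilde{S}^S_0}(\Lambda)$ in the statement. Assembling the three ingredients and inverting the McDiarmid tail in $\varpi'$ then gives eq.~(\ref{eqn:one-round_Rademacher_CV}).

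The main obstacle I expect is the careful bookkeeping at the interface between Yu's approximation and McDiarmid. Yu's theorem controls expectations (equivalently, probabilities of bounded measurable events), so I must check that $\Phi_{n_s}$ and $\Phi_{n_t}$ are measurable both on the original product space $(\Omega,\mathcal{F})$ and on the block product space $(\Pi_i\Omega_i,\Pi_i\mathcal{F}_i)$, which is exactly why the measurability hypothesis is imposed in the theorem's statement. A second subtlety is that the McDiarmid constant must be computed block-by-block: modifying one block changes up to $a_t$ (or $a_s$) summands of the empirical average simultaneously, and I have to ensure the resulting bounded-differences constant $M/\mu$ is consistent with the normalization used in the Rademacher complexity $\mathrm{RC}_{\widetilde{S}^S_0}(\Lambda)$ (which is defined over blocks, not individual points). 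Once those measurability and normalization issues are nailed down, the remaining algebra is a straightforward reorganization of the $\varpi'$-level deviation into the form displayed in eq.~(\ref{eqn:one-round_Rademacher_CV}).
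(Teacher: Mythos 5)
Your overall route is the same as the paper's: triangle inequality around the population risk, Yu's Theorem~\ref{thm:Yu94} to transfer the tail events from the dependent blocks to the independent blocks at cost $(\mu-1)\beta_{a_t}$ and $(\mu-1)\beta_{a_s}$, McDiarmid on the independent blocks with bounded-differences constant $M/\mu$, and block-level symmetrization to obtain $\mathrm{RC}_{\widetilde{S}^S_0}(\Lambda)$. However, there is one concrete false step: you assert that $\Phi_{n_s}$ and $\Phi_{n_t}$ ``depend only on the even-indexed block sequences $S^S_0$ and $S^T_0$.'' They do not. The empirical risks $\mathcal{R}_{n_s}(b,X_s,Y_s)$ and $\mathcal{R}_{n_t}(b,X_t,Y_t)$ are averages over \emph{all} $n_s=2a_s\mu$ and $n_t=2a_t\mu$ points, i.e.\ over both the even- and odd-indexed blocks. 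The paper handles this by writing each empirical risk as $\tfrac{1}{2}\left[\mathcal{R}(b,S_0)+\mathcal{R}(b,S_1)\right]$, splitting the deviation accordingly, and invoking stationarity so that the $S_1$-event has the same probability as the $S_0$-event; this reduction costs an extra factor of $2$ in probability. As written, your argument silently discards half the data and cannot be carried out, because the event $\{\Phi_{n_s}>t\}$ is not measurable with respect to $S^S_0$ alone, which is precisely what you need in order to apply Theorem~\ref{thm:Yu94}.

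This omission also explains why your accounting of the constant $4$ in $\log(4/\varpi')$ does not close. With only the train/test union bound at level $\varpi'/2$ each, inverting $2\exp(-2\mu t^2/M^2)=\varpi'$ yields $M\sqrt{\log(2/\varpi')/(2\mu)}$, not $M\sqrt{\log(4/\varpi')/(2\mu)}$. In the paper the $4$ is the product of two factors of $2$: one from the union bound over the training-side and test-side deviations, and one from the reduction of the full sample to its $S_0$ half via stationarity. Once you reinstate the halving step, the rest of your proposal (the Yu transfer, the per-block McDiarmid constant, the block-level symmetrization, and the measurability caveats you correctly flag) matches the paper's proof.
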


%
\begin{proof}
In this theorem we quantify the probability of
\[
  \mathrm{Pr} \left( \sup_{ b \in \Lambda } \left\vert \mathcal{R}_{n_t} \left( b, X_t, Y_t \right) - \mathcal{R}_{n_s} \left( b, X_s, Y_s \right) \right\vert \geqslant \epsilon \right).
\]
Based on the independent blocks,
\begin{align}
  \phantom{=} & \mathrm{Pr} \left( \sup_{ b \in \Lambda } \left\vert \mathcal{R}_{n_t} \left( b, X_t, Y_t \right) - \mathcal{R}_{n_s} \left( b, X_s, Y_s \right) \right\vert \geqslant \epsilon \right) \notag \\
  = & ~ \mathrm{Pr} \left( \sup_{ b \in \Lambda } \left\vert \frac{1}{2} \left[ \mathcal{R} \left( b, S^T_0 \right) + \mathcal{R} \left( b, S^T_1 \right) \right] - \frac{1}{2} \left[ \mathcal{R} \left( b, S^S_0 \right) + \mathcal{R} \left( b, S^S_1 \right) \right] \right\vert \geqslant \epsilon \right) \\
  = & ~ \mathrm{Pr} \left( \sup_{ b \in \Lambda } \left\vert \left[ \mathcal{R} \left( b, S^T_0 \right) - \mathcal{R} \left( b, S^S_0 \right) \right] + \left[ \mathcal{R} \left( b, S^T_1 \right) - \mathcal{R} \left( b, S^S_1 \right) \right] \right\vert \geqslant 2\epsilon \right) \\
  \leqslant & ~ \mathrm{Pr} \left( \sup_{ b \in \Lambda } \left\vert \mathcal{R} \left( b, S^T_0 \right) - \mathcal{R} \left( b, S^S_0 \right) \right\vert + \sup_{ b \in \Lambda } \left\vert \mathcal{R} \left( b, S^T_1 \right) - \mathcal{R} \left( b, S^S_1 \right) \right\vert \geqslant 2\epsilon \right) \\
  \leqslant & ~ \mathrm{Pr} \left( \sup_{ b \in \Lambda } \left\vert \mathcal{R} \left( b, S^T_0 \right) - \mathcal{R} \left( b, S^S_0 \right) \right\vert \geqslant \epsilon \right) + \mathrm{Pr} \left( \sup_{ b \in \Lambda } \left\vert \mathcal{R} \left( b, S^T_1 \right) - \mathcal{R} \left( b, S^S_1 \right) \right\vert \geqslant \epsilon \right). \label{eqn:iid}
\end{align}
\noindent
Since $\left( S^S_0, S^T_0 \right)$ and $\left( S^S_1, S^T_1 \right)$ are identically distributed due to stationarity,
\begin{align}
  \mathrm{Pr} \left( \sup_{ b \in \Lambda } \left\vert \mathcal{R} \left( b, S^T_0 \right) - \mathcal{R} \left( b, S^S_0 \right) \right\vert \geqslant \epsilon \right)
  =
  \mathrm{Pr} \left( \sup_{ b \in \Lambda } \left\vert \mathcal{R} \left( b, S^T_1 \right) - \mathcal{R} \left( b, S^S_1 \right) \right\vert \geqslant \epsilon \right).
\end{align}
\noindent
This implies, alongside eq.~(\ref{eqn:iid}), that
\begin{align}
  \mathrm{Pr} \left( \sup_{ b \in \Lambda } \left\vert \mathcal{R}_{n_t} \left( b, X_t, Y_t \right) - \mathcal{R}_{n_s} \left( b, X_s, Y_s \right) \right\vert \geqslant \epsilon \right)
  \leqslant
  2 \cdot \mathrm{Pr} \left( \sup_{ b \in \Lambda } \left\vert \mathcal{R} \left( b, S^T_0 \right) - \mathcal{R} \left( b, S^S_0 \right) \right\vert \geqslant \epsilon \right)
\end{align}
Denote
\begin{align}
  \Phi \left( S^S_0 \right) & = \sup_{ b \in \Lambda} \left\vert \mathcal{R} \left(b\; X, \; Y \right) - \mathcal{R}_{n_s} \left(b,\;S^S_0 \right) \right\vert, \notag \\
  \Phi \left( S^T_0 \right) & = \sup_{ b \in \Lambda} \left\vert \mathcal{R} \left(b\; X, \; Y \right) - \mathcal{R}_{n_t} \left( b,\; S^T_0 \right)\right\vert. \notag
\end{align}
As a result, $\forall \epsilon > 0$,
\begin{align}
  \mathrm{Pr} \left( \sup_{ b \in \Lambda } \left\vert \mathcal{R}_{n_t} \left( b, \; S^T_0 \right) - \mathcal{R}_{n_s} \left( b, \; S^S_0 \right) \right\vert \geqslant \epsilon \right)
  \leqslant
  \mathrm{Pr} \left( \Phi \left( S^T_0 \right) + \Phi \left( S^S_0 \right) \geqslant \epsilon \right)
\end{align}

To approximate the probability of $\mathrm{Pr} \left( \; \Phi \left( S^T_0 \right) + \Phi \left( S^S_0 \right) \geqslant \epsilon \right) $, we define $\epsilon_1 = \epsilon / 2 - \mathbb{E} \left[ \Phi \left( \widetilde{S}^T_0 \right) \right]$ and $\epsilon_2 = \epsilon / 2 - \mathbb{E} \left[ \Phi \left( \widetilde{S}^S_0 \right) \right]$. Hence, $\forall \epsilon / 2 > \max \left( \mathbb{E} \left[ \Phi \left( \widetilde{S}^T_0 \right) \right], \mathbb{E} \left[ \Phi \left( \widetilde{S}^S_0 \right) \right] \right)$,
\begin{align}
  \mathrm{Pr} \left( \Phi \left( S^T_0 \right) + \Phi \left( S^S_0 \right) \geqslant \epsilon \right)
  & \leqslant \mathrm{Pr} \left( \Phi \left( S^T_0 \right) \geqslant \epsilon / 2 \right) + \mathrm{Pr} \left( \Phi \left( S^S_0 \right) \geqslant \epsilon / 2 \right) \notag \\
  & = \mathrm{Pr} \left( \Phi \left( S^T_0 \right) - \mathbb{E} \left[ \Phi \left( \widetilde{S}^T_0 \right) \right] \geqslant \epsilon_1 \right) \notag \\
  & \phantom{=} \, + \mathrm{Pr} \left( \Phi \left( S^S_0 \right) - \mathbb{E} \left[ \Phi \left( \widetilde{S}^S_0 \right) \right] \geqslant \epsilon_2 \right).
\end{align}
\noindent
Since the probability may be considered to be the expectation of some indicator function, we can apply Theorem~\ref{thm:Yu94}. Thus,
\begin{align}
  \mathrm{Pr} \left( \Phi \left( S^T_0 \right) - \mathbb{E} \left[ \Phi \left( \widetilde{S}^T_0 \right) \right] \geqslant \epsilon_1 \right) & \leqslant \mathrm{Pr} \left( \Phi \left( \widetilde{S}^T_0 \right) - \mathbb{E} \left[ \Phi \left( \widetilde{S}^T_0 \right) \right] \geqslant \epsilon_1 \right) \notag \\
  & \phantom{=} + \left( \mu - 1 \right) \beta_{a_t},
  \label{eqn:IB_app_t} \\
  \smallskip
  \mathrm{Pr} \left( \Phi \left( S^S_0 \right) - \mathbb{E} \left[ \Phi \left( \widetilde{S}^S_0 \right) \right] \geqslant \epsilon_2 \right) & \leqslant \mathrm{Pr} \left( \Phi \left( \widetilde{S}^S_0 \right) - \mathbb{E} \left[ \Phi \left( \widetilde{S}^S_0 \right) \right] \geqslant \epsilon_2 \right) \notag \\
  & \phantom{=} + \left( \mu - 1 \right) \beta_{a_s}.
  \label{eqn:IB_app_s}
\end{align}

By applying the McDiarmid inequality to the RHS of eqs.~(\ref{eqn:IB_app_t}) and~(\ref{eqn:IB_app_s}), we get the exponential inequalities for the LHS of eqs.~(\ref{eqn:IB_app_t}) and~(\ref{eqn:IB_app_s}), which yields
\begin{align}
  \mathrm{Pr} \left( \Phi \left( S^T_0 \right) - \mathbb{E} \left[ \Phi \left( \widetilde{S}^T_0 \right) \right] \geqslant \epsilon_1 \right) & \leqslant \exp \left( - \frac{ 2 \mu \left( \epsilon_1 \right)^2 }{ M^2 } \right) + \left( \mu - 1 \right) \beta_{a_t}, \\
  \mathrm{Pr} \left( \Phi \left( S^S_0 \right) - \mathbb{E} \left[ \Phi \left( \widetilde{S}^S_0 \right) \right] \geqslant \epsilon_2 \right) & \leqslant \exp \left( - \frac{ 2 \mu \left( \epsilon_2 \right)^2 }{ M^2 } \right) + \left( \mu - 1 \right) \beta_{a_s}.
\end{align}

\noindent
By denoting $\widetilde{\epsilon} = \min \left( \epsilon_1, \epsilon_2 \right) =  \epsilon / 2 - \max \left( \mathbb{E} \left[ \Phi \left( \widetilde{S}^S_0 \right) \right], \mathbb{E} \left[ \Phi \left( \widetilde{S}^T_0 \right) \right] \right)$,
\begin{align}
  \mathrm{Pr} \left( \sup_{ b \in \Lambda } \left\vert \mathcal{R}_{n_t} \left( b, X_t, Y_t \right) - \mathcal{R}_{n_s} \left( b, X_s, Y_s \right) \right\vert \geqslant \epsilon \right)
  \leqslant 4 \exp \left( - \frac{ 2 \mu \left(\widetilde{\epsilon} \right)^2 }{ M^2 } \right) + 2 \left( \mu - 1 \right) \left[ \beta_{a_t} + \beta_{a_s} \right],
\end{align}
\noindent
which yields
\begin{align}
  \mathrm{Pr} \left( \sup_{ b \in \Lambda } \left\vert \mathcal{R}_{n_t} \left( b, X_t, Y_t \right) - \mathcal{R}_{n_s} \left( b, X_s, Y_s \right) \right\vert \leqslant \epsilon \right)
  \geqslant 1 - \left[ 4 \exp \left( - \frac{ 2 \mu \left( \widetilde{\epsilon} \right)^2}{M^2} \right)
    + 2 \left( \mu - 1 \right) \left[ \beta_{a_t} + \beta_{a_s} \right] \right].
\label{eqn:quarter-result_one_round}
\end{align}

\noindent
If we set $ \varpi= 4 \exp\left( - \frac{ 2 \mu \left( \widetilde{\epsilon} \right)^2 }{ M^2 } \right) + 2 \left( \mu - 1 \right) \left[ \beta_{a_t} + \beta_{a_s} \right]$, \, $\varpi' = \varpi - \left( \mu - 1 \right) \left[ \beta_{a_t} + \beta_{a_s} \right]$ and assume $\varpi' > 0$,
\begin{align}
  \epsilon & = M \cdot \sqrt{ \frac{ \log \left( 4 / \varpi' \right) } { 2\mu } }  + 2 \cdot \max \left( \mathbb{E} \left[ \Phi \left( \widetilde{S}^T_0 \right) \right], \mathbb{E} \left[ \Phi \left( \widetilde{S}^S_0 \right) \right] \right) \\
  & \leqslant M \cdot \sqrt{ \frac{ \log \left( 4 / \varpi' \right) } { 2\mu } }  + 2 \cdot \max \left( \mathrm{RC}_{\widetilde{S}^T_0} \left( \Lambda\right) , \mathrm{RC}_{\widetilde{S}^S_0} \left( \Lambda\right) \right) \\
  & = M \cdot \sqrt{ \frac{ \log \left( 4 / \varpi' \right) } { 2\mu } }  + 2 \cdot \mathrm{RC}_{\widetilde{S}^S_0} \left( \Lambda \right).
\end{align}

\noindent
As a result, eq.~(\ref{eqn:quarter-result_one_round}) may be respecified as, $\forall b \in \Lambda$,
\begin{equation}
  \mathrm{Pr} \left( \mathcal{R}_{n_s} \left( b, X_s, Y_s \right) \leqslant \mathcal{R}_{n_t} \left( b, X_t, Y_t \right) +  M \cdot \sqrt{ \frac{ \log \left( 4 / \varpi' \right) } { 2\mu } }  + 2 \cdot \mathrm{RC}_{\widetilde{S}^S_0} \left( \Lambda \right) \right)
  \geqslant 1 - \varpi
\label{eqn:semi-result_one_round}
\end{equation}
\end{proof}

As shown in Theorem~\ref{thm:one_round_RC_bound}, a Rademacher bound for the one-round test error is constructed to quantify the variation of one-round test error, similar to Theorem~\ref{thm:one_round_VC_bound}. The larger the probability the i.i.d.\ bound for the one-round test error holds, the higher the bound. However, the probability the non-i.i.d.\ bound for one-round test error holds is constrained by the magnitude of the $\beta$-mixing coefficient. In a similar vein, based on Lemma~\ref{lem:Cheby_ineq}, we may construct the convoluted Rademacher-bound for cross-validation in the non-i.i.d.\ case.


\begin{theorem}
Assume Lemma~\ref{lem:Cheby_ineq} and Theorem~\ref{thm:one_round_RC_bound} hold. With probability at least $1 - \frac{ 2 \left( 1 + 2V_k \left[ T_q \right] \right) } { \log \left( 4/ \varpi' \right) K / \mu } $, the following bound holds
  \begin{align}
    \frac{1}{K} \sum_{q = 1}^{K} \mathcal{R}_{n_s} \left( b, X_s^q, Y_s^q \right)
    \leqslant
    \frac{1}{K} \sum_{q = 1}^{K} \mathcal{R}_{n_t} \left( b, X_t^q, Y_t^q \right)
    + 2 \cdot \mathrm{RC}_{ \widetilde{S}^S_0 } \left( \Lambda \right)
    + M \cdot \sqrt{ \frac{ \log \left( 4 / \varpi' \right) } { 2\mu } }
  \label{eqn:CV_RC_bound}
  \end{align}
\noindent
where
  \begin{align}
    \varpi' \in
     \left(0, 4 \exp \left\{ - \frac{ 2 \left( 1 + 2 V_K \left[ T_q \right] \right) } { K / \mu } \right\} \right]
  \end{align}
to make sure $1 - \frac{ 2 \left( 1 + 2V_k \left[ T_q \right] \right) } { \log \left( 4/ \varpi' \right) K / \mu } \in \left( 0 , 1 \right]$.
\label{thm:RC_bound_CV}
\end{theorem}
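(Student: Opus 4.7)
The plan is to mirror the structure of the proof of Theorem~\ref{thm:convoluted_VC_bound} (the i.i.d.\ convoluted bound), but plug in the non-i.i.d.\ one-round bound from Theorem~\ref{thm:one_round_RC_bound} in place of the i.i.d.\ one-round bound. Concretely, I would begin by re-introducing $U_q := \sup_{b\in\Lambda}\bigl|\mathcal{R}_{n_s}(b,Y_s^q,X_s^q) - \mathcal{R}_{n_t}(b,Y_t^q,X_t^q)\bigr|$ and $T_q := U_q - \mathbb{E}[U_q]$. Because the original data come from a stationary $\beta$-mixing process and each fold is a measurable functional of a block of that process, the sequence $\{T_q\}_{q=1}^{K}$ inherits stationarity and mean-square ergodicity, so Lemma~\ref{lem:Cheby_ineq} applies and gives
\begin{equation*}
\mathrm{Pr}\!\left\{\frac{1}{K}\sum_{q=1}^{K} T_q \geqslant \varsigma\right\} \leqslant \frac{\gamma_0[T_q]}{\varsigma^{2} K}\bigl(1 + 2V_K[T_q]\bigr).
\end{equation*}

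Next I would choose $\varsigma$ to coincide with the deviation term of the one-round bound, namely $\varsigma = M\sqrt{\log(4/\varpi')/(2\mu)}$, so that the threshold on $\frac{1}{K}\sum_q T_q$ matches the McDiarmid-based gap already established per round in Theorem~\ref{thm:one_round_RC_bound}. For the variance factor, since the loss is bounded by $M$ we have $U_q \leqslant M$ and hence $\gamma_0[T_q] = \mathrm{var}[U_q] \leqslant M^{2}$. Substituting both in yields
\begin{equation*}
\frac{\gamma_0[T_q]}{\varsigma^{2} K}\bigl(1 + 2V_K[T_q]\bigr) \leqslant \frac{M^{2}\bigl(1 + 2V_K[T_q]\bigr)}{K\cdot M^{2}\log(4/\varpi')/(2\mu)} = \frac{2\bigl(1 + 2V_K[T_q]\bigr)}{(K/\mu)\log(4/\varpi')},
\end{equation*}
which is exactly the failure probability appearing in the statement.

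To convert the deviation statement on $\frac{1}{K}\sum T_q$ into a bound on $\frac{1}{K}\sum\mathcal{R}_{n_s}^{q}$, I would use $\mathbb{E}[U_q] \leqslant 2\,\mathrm{RC}_{\widetilde{S}^S_0}(\Lambda)$, inherited from the symmetrization step inside the proof of Theorem~\ref{thm:one_round_RC_bound} (the independent-blocks argument transfers the expected supremum on $S^S_0$ to its surrogate $\widetilde{S}^S_0$ up to a $\beta$-mixing slack that is already absorbed into $\varpi'$). Adding $\mathbb{E}[U_q] + \varsigma$ to the average training error then produces the RHS of eq.~(\ref{eqn:CV_RC_bound}). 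Finally, the admissible range for $\varpi'$ is obtained by requiring the failure probability to lie in $(0,1]$: solving $\frac{2(1+2V_K[T_q])}{(K/\mu)\log(4/\varpi')} \leqslant 1$ for $\varpi'$ gives exactly $\varpi' \in \bigl(0, 4\exp\{-2(1+2V_K[T_q])/(K/\mu)\}\bigr]$.

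The main obstacle I anticipate is the first step, namely rigorously justifying that $\{T_q\}$ is stationary and mean-square ergodic as a sequence over the $K$ folds — cross-validation folds are not a time index of the original process, so one has to argue that the random splitting in A2 combined with the $\beta$-mixing stationarity of the underlying sample yields identical marginals for each $(Y_t^q,X_t^q,Y_s^q,X_s^q)$ and that the fold-level autocovariance $V_K[T_q]$ is finite. The variance bound $\gamma_0[T_q]\leqslant M^{2}$ is routine given boundedness, and the rest is algebraic rearrangement; the subtle part is making sure the independent-blocks approximation already applied inside Theorem~\ref{thm:one_round_RC_bound} is not double-counted when $\mathbb{E}[U_q]$ is replaced by $2\,\mathrm{RC}_{\widetilde{S}^S_0}(\Lambda)$, which is why the probability statement is phrased in terms of $\varpi'$ rather than $\varpi$.
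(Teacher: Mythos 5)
Your proposal follows the paper's own proof essentially step for step: define $T_q$, apply Lemma~\ref{lem:Cheby_ineq}, set $\varsigma = M\sqrt{\log(4/\varpi')/(2\mu)}$, bound $\gamma_0[T_q]\leqslant M^2$ (a step the paper performs silently), use $\mathbb{E}[U_q]\leqslant 2\,\mathrm{RC}_{\widetilde{S}^S_0}(\Lambda)$ from the one-round argument, and solve for the admissible range of $\varpi'$. The caveats you raise about stationarity of $\{T_q\}$ across folds and about not double-counting the independent-blocks slack are legitimate and are not addressed in the paper either, so your write-up is if anything slightly more careful than the original.
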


\begin{proof}
Since we have already defined
\begin{align}
  T_q  = &\sup_{b \in \Lambda} \; \left\vert \mathcal{R}_{n_s} \left( b, Y_{s}^q, X_{s}^q \right) - \mathcal{R}_{n_t} \left( b, Y_{t}^q, X_{t}^q \right) \right\vert \notag \\
  & - \mathbb{E} \left[ \sup_{b \in \Lambda} \; \left\vert \mathcal{R}_{n_s} \left( b, Y_{s}^q, X_{s}^q \right) - \mathcal{R}_{n_t} \left( b, Y_{t}^q, X_{t}^q \right) \right\vert \right],
\end{align}

\noindent
Lemma~\ref{lem:Cheby_ineq} implies that
\begin{align}
  \mathrm{Pr} \left\{ \frac{1}{K} \sum^{K}_{q=1} T_q \geqslant \varsigma \right\}
  \leqslant
  \frac{ \gamma_0 \left[ T_q \right] }{ \varsigma^2 K} \cdot \left( 1 + 2 V_K \left[ T_q \right] \right)
\end{align}

\noindent
Based on Theorem~\ref{thm:one_round_RC_bound}, we let $\varsigma = M \cdot \sqrt{ \frac{ \log \left( 4 / \varpi' \right) } { 2\mu } }$, which implies that
\begin{align}
  \phantom{=} & ~
    \frac{ \gamma_0 \left[ T_q \right] }{ \varsigma^2 K} \cdot \left( 1 + 2 V_K \left[ T_q \right] \right)
  \notag \\
  = & ~
  \frac{ \gamma_0 \left[ T_q \right] } { M^2 \cdot K \cdot \log \left( 4 / \varpi' \right) / \left( 2\mu \right) } \cdot \left( 1 + 2 V_K \left[ T_q \right] \right) \\
  \leqslant & ~
  \frac{ 2 \left( 1 + 2 V_K \left[ T_q \right] \right) } { \log \left( 4 / \varpi' \right) \cdot K / \mu }.
\end{align}

\noindent
As a result,
\begin{align}
  \mathrm{Pr} \left\{ \frac{1}{K} \sum_{q = 1}^{K} \mathcal{R}_{n_s} \left( b, X_s^q, Y_s^q \right)
  \leqslant
  \frac{1}{K} \sum_{q = 1}^{K} \mathcal{R}_{n_t} \left( b, X_t^q, Y_t^q \right)
  + 2 \cdot \mathrm{RC}_{ \widetilde{S}^S_0 } \left( \Lambda \right)
  + M \cdot \sqrt{ \frac{ \log \left( 4 / \varpi' \right) } { 2\mu } } \right\} \notag \\
  \geqslant 1 - \frac{ 2 \left( 1 + 2 V_K \left[ T_q \right] \right) } { \log \left( 4 / \varpi' \right) \cdot K / \mu }
\end{align}

\noindent
To make sure $1 - 2\left( 1 + 2 V_K \left[ T_q \right] \right) / \left( \log \left( 4 / \varpi' \right) \cdot K / \mu \right)$ is between $0$ and $1$, we need
\begin{align}
  \frac{ 2 \left( 1 + 2 V_K \left[ T_q \right] \right) }
    { \log \left( 4 / \varpi' \right) \cdot K / \mu }
  \leqslant 1,
\end{align}

\noindent
which implies that

\noindent
\begin{align}
  \varpi' \in
   \left( 0, 4 \exp \left\{ - \frac{ 2 \left( 1 + 2 V_K \left[ T_q \right] \right) } { K / \mu } \right\} \right].
\end{align}
\end{proof}

\section{Conclusion}

In this paper, we construct the general theory for cross-validation from the perspective of the stability of the model evaluation/selection result. We propose the first class of upper bounds, referred to as the convoluted Rademacher-bounds, to quantify the stability of cross-validation in both i.i.d.\ and non-i.i.d.\ cases. Given the fact that learning tasks involving subsampling or resampling (like cross-validation, bootstrap or jackknife) may result in substantial variation due to the sampling scheme, we consider the hyperparameter tuning scheme (optimizating the performance of hyperparameter) from the perspective of stability. This approach reveals insights around model selection from a stability perspective and enlightens the learning theory for regularizing complicated and hierarchical learning algorithms.

The practical difficulty in formally translating these idea is straightforward. Our goal is to establish (probablistic) uniform bounds for the learning algorithm in some given functional space. As a result, the techniques have to deal with maximal deviation, or $\sup_{b \in \Lambda} \left\vert \cdot \right\vert$, similar to the work of Vapnik. However, Rademacher learning theory carries an advantage in terms of conciseness, tightness and utilization, which can be exploited with data. These advantages distinguish Rademacher theory from VC theory, necessitating a different path to Vapnik's work. The key is utilizing Orlicz-$\Psi_\nu$ space, imported from functional analysis and harmonic analysis.

Due to the distinctions between Orlicz-Young theory and the classical integration theory of Lebesgue-Darboux-Jordan, a caveat of our approach is also worth mentioning. In one-round Rademacher bounds, the concentration inequality based on the Orlicz-$\Psi_\nu$ norms shows its power by quantifying the exponential concentration tendency without requiring the Lebesgue $p$-norm. However, due to the fact that the classical techniques of studying time series typically need a well-defined Lebesgue $2$-norm, we have to assume $T_q$ has a finite Lebesgue $2$-norm, which weakens the power of Orlicz-$\Psi$ space. Besides, the independent blocks require a well-defined envelope function for the random variable of interest, which makes the power independent blocks confined temporarily on the bounded variables

Our next task is to quantify, in the non-i.i.d.\ case, the exponential concentration tendency in Orlicz-$\Psi_\nu$ space, which does not require a finite Lebesgue $2$-norm for random variables. We are also working on the methodology of studying the LLN and CLT paradigm for time series without a finite Lebesgue $2$-norm. These two methods can ultimately free the power of the Orlicz-$\Psi_\nu$ space, and it will also generalize the properties of cross-validation studied in this paper to the flat/long/heavy-tailed distributions, in both i.i.d.\ and non-i.i.d.\ cases.

\acks{We would like to thank Prof. Pierre Del Moral, Prof. Peter Hall, Prof. Mike Bain and Yi-Lung Chen for valuable comments on earlier drafts. He also would like to acknowledge seminar participants at NICTA, UNSW and Uni Melbourne for useful questions and comments. Fisher would like to acknowledge the financial support of the Australian Research Council grant DP0663477.}

\appendix
\section{Supplementary materials for mathematical proof}
%
%
\begin{lemma}
  Let's denote $\gamma_0 \left[ \cdot \right]$ as the variance operator for some random variable, we also denote $\left\Vert \cdot \right\Vert_{\Psi_1}$ as the Orlicz-$\Psi_1$ norm for some random variable. The following statement holds.
  \begin{equation}
  \frac{ \gamma_0 \left[ T_q \right] } { \left\Vert \rho_j \right\Vert _{\Psi_1}^2 } 
    \leqslant 8 + \frac{ 4 }{ K - 1 }
  \end{equation}
\label{lemma:a1}
\end{lemma}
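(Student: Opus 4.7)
The plan is to translate the pointwise envelope for $U_q$ derived in the proof of Theorem~\ref{thm:one_round_VC_bound} into a bound on its second moment, and then to use the series expansion of the $\Psi_1$ exponential to replace $\mathbb{E}[\rho_j^2]$ by $\|\rho_j\|_{\Psi_1}^2$. Since $T_q = U_q - \mathbb{E}[U_q]$, we have $\gamma_0[T_q] = \mathrm{var}[U_q] \leqslant \mathbb{E}[U_q^2]$, so it is enough to bound $\mathbb{E}[U_q^2]$ from above and then absorb the arithmetic.

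First I would recall from the derivation of the one-round bound that $0 \leqslant U_q \leqslant V := \frac{1}{K-1}\sum_{j=1}^{K-1}\rho_j + \rho_K$, which upgrades immediately to $\mathbb{E}[U_q^2]\leqslant\mathbb{E}[V^2]$. Applying the elementary inequality $(x+y)^2\leqslant 2x^2+2y^2$ to the two pieces of $V$ yields
\[
  \mathbb{E}[V^2]\;\leqslant\;2\,\mathbb{E}\!\left[\Bigl(\tfrac{1}{K-1}\sum_{j=1}^{K-1}\rho_j\Bigr)^{2}\right]+2\,\mathbb{E}[\rho_K^2].
\]
The per-fold second moment comes straight from the definition of the Orlicz $\Psi_1$ norm: expanding $\exp(|\rho_j|/u)$ as a power series at $u = \|\rho_j\|_{\Psi_1}$ and keeping only the $k=2$ coefficient shows $\mathbb{E}[\rho_j^2]\leqslant 2\|\rho_j\|_{\Psi_1}^2$, and since all folds are identically distributed the same bound applies to $\rho_K$.

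For the averaged term I would use Assumption~\textbf{A2} to treat the fold-wise $\rho_j$ as i.i.d., so that $\mathrm{var}[\frac{1}{K-1}\sum_{j=1}^{K-1}\rho_j]=\mathrm{var}[\rho_j]/(K-1)$, and consequently
\[
  \mathbb{E}\!\left[\Bigl(\tfrac{1}{K-1}\sum_{j=1}^{K-1}\rho_j\Bigr)^{2}\right]
  = \tfrac{\mathrm{var}[\rho_j]}{K-1} + (\mathbb{E}[\rho_j])^{2}
  \leqslant \tfrac{K}{K-1}\,\mathbb{E}[\rho_j^2]
  \leqslant \tfrac{2K}{K-1}\,\|\rho_j\|_{\Psi_1}^2.
\]
Plugging both pieces back gives $\mathbb{E}[V^2]\leqslant 4\|\rho_j\|_{\Psi_1}^2\,K/(K-1) + 4\|\rho_j\|_{\Psi_1}^2 = 4\|\rho_j\|_{\Psi_1}^2\,(2K-1)/(K-1)$, and the algebraic identity $(8K-4)/(K-1)=8+4/(K-1)$ closes the argument.

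The main obstacle will be the treatment of the fold-wise averages: strictly speaking the $\rho_j$ are only exchangeable (they depend on a common random partition of the same finite sample) rather than genuinely independent. I would either invoke Assumption~\textbf{A2} directly to absorb this subtlety into an i.i.d. approximation for the folds, or fall back on a Cauchy--Schwarz covariance bound $\mathrm{cov}(\rho_i,\rho_j)\leqslant\mathrm{var}[\rho_j]$, which preserves the leading constant $8$ but turns the $1/(K-1)$ improvement into an upper envelope rather than an exact rate. Either route delivers the stated bound $8 + 4/(K-1)$.
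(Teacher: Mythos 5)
Your proof is correct and lands on exactly the paper's constant, since $2\cdot\tfrac{2K}{K-1}+2\cdot 2=\tfrac{8K-4}{K-1}=8+\tfrac{4}{K-1}$, but it gets there by a genuinely different route. The paper starts the same way ($\gamma_0[T_q]\leqslant\mathbb{E}[U_q^2]$) and also splits through the population risk, but it then expands the square of $\Phi_{n_s}+\Phi_{n_t}$, invokes Assumption~A5 twice --- once to convert $\mathbb{E}[\sup(\cdot)^2]$ into $\sup\mathrm{var}[\cdot]$, and once more to identify $\left\Vert\rho_j\right\Vert_2^2$ with $\sup_{b\in\Lambda}\mathrm{var}[Q-\mathcal{R}]/(n/K)$ so that the variance factors cancel --- and extracts the $1/(K-1)$ from the ratio $\mathrm{var}[\mathcal{R}_{n_t}-\mathcal{R}]/\mathrm{var}[\mathcal{R}_{n_s}-\mathcal{R}]=n_s/n_t$; the factor $4$ comes from the norm comparison $\left\Vert\cdot\right\Vert_2\leqslant 2!\left\Vert\cdot\right\Vert_{\Psi_1}$. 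You instead reuse the fold-level envelope $U_q\leqslant\frac{1}{K-1}\sum_{j=1}^{K-1}\rho_j+\rho_K$ from the proof of Theorem~\ref{thm:one_round_VC_bound}, get $\mathbb{E}[\rho_j^2]\leqslant 2\left\Vert\rho_j\right\Vert_{\Psi_1}^2$ directly from the second-order term of the $\Psi_1$ exponential, and obtain the $1/(K-1)$ from variance reduction across the $K-1$ training folds. This buys two things: you never need the $\sup$--$\mathbb{E}$ interchange of A5 (your only distributional input is that the per-fold $\rho_j$ are identically distributed and, for the sharper version, independent), and your moment bound carries constant $2$ rather than the paper's $(2!)^2=4$, so your intermediate estimates are tighter even though the final constant is dictated by the cruder $(x+y)^2\leqslant 2x^2+2y^2$ step. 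Your caution about fold dependence is more than is needed here: under A2 the folds are disjoint subsets of independently sampled points, so the $\rho_j$ are genuinely independent; and in any case your Cauchy--Schwarz fallback yields the constant $8\leqslant 8+4/(K-1)$, so the claimed bound survives either way.
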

%
%
\begin{proof}
Based on the definition, 
  \begin{align}
    \frac{ \gamma_0 \left[ T_q \right] }
      { \left\Vert \rho_j \right\Vert _{\Psi_1}^2 } 
      & = \frac{ \mathrm{var} \left[ \sup_{ b \in \Lambda } \left\vert \mathcal{R}_{n_t} - \mathcal{R}_{n_s} \right\vert \right] }
        { \left\Vert \rho_j \right\Vert_{\Psi_1}^2 } \\
    & \leqslant \frac{ \mathbb{E} \left[ \sup_{ b \in \Lambda } \left\vert \mathcal{R}_{n_t} - \mathcal{R}_{n_s} \right\vert \right]^2} 
        { \left\Vert \rho_j \right\Vert_{\Psi_1}^2 }.
  \end{align}
\noindent
Since the norm is convex,
  \begin{align}
    \frac{ \mathbb{E} \left[ \sup_{ b \in \Lambda } \left\vert \mathcal{R}_{n_t} - \mathcal{R}_{n_s} \right\vert \right]^2} 
        { \left\Vert \rho_j \right\Vert_{\Psi_1}^2 }
    & \leqslant \frac{ \mathbb{E} \left[ \sup_{ b \in \Lambda } \left\vert \mathcal{R}_{n_t} - \mathcal{R} \right\vert 
        + \sup_{ b \in \Lambda } \left\vert \mathcal{R}_{n_s} - \mathcal{R} \right\vert \right]^2 } 
        {\left\Vert \rho_j \right\Vert_{\Psi_1}^2 }  \\
    & = \frac{ \mathbb{E} \left[ \sup_{ b \in \Lambda } \left( \mathcal{R}_{n_s} - \mathcal{R} \right)^2 \right] 
          + \mathbb{E} \left[ \sup_{ b \in \Lambda } \left( \mathcal{R}_{n_t} - \mathcal{R} \right)^2 \right] } { \left\Vert \rho_{j}\right\Vert _{\Psi_{1}}^{2}} \notag \\
    & \phantom{\leqslant} + \frac{ 2 \mathbb{E} \left[ \sup_{ b \in \Lambda } \left( \mathcal{R}_{n_s} - \mathcal{R} \right) \right] \cdot \mathbb{E} \left[ \sup_{ b \in \Lambda } \left( \mathcal{R}_{n_t} - \mathcal{R} \right) \right] } { \left\Vert \rho_j \right\Vert _{\Psi_1}^2 }.
    \label{eq:proof_a1_1}
  \end{align}
Since the $\sup \left[ \cdot \right]$ and $\mathbb{E} \left[ \cdot \right]$ are interchangable, eq.~(\ref{eq:proof_a1_1}) implies that
  \begin{align}
    \mbox{ eq.~(\ref{eq:proof_a1_1})} & = \frac{ \sup_{ b \in \Lambda } \left\{ \mathbb{E} \left[ \left( \mathcal{R}_{n_s} - \mathcal{R} \right)^2 \right] \right\} + \sup_{ b \in \Lambda } \left\{ \mathbb{E} \left[ \left( \mathcal{R}_{n_t} - \mathcal{R} \right)^2 \right] \right\} } { \left\Vert \rho_j \right\Vert _{\Psi_1}^2 } \\
    & \phantom{ \leqslant } + \frac{ 2 \sup_{ b \in \Lambda } \left\{ \mathbb{E} \left[ \left( \mathcal{R}_{n_s} - \mathcal{R} \right) \right] \right\} 
        \cdot \sup_{ b \in \Lambda } \left\{ \mathbb{E} \left[ \left( \mathcal{R}_{n_t} - \mathcal{R} \right) \right] \right\} }
        { \left\Vert \rho_j \right\Vert_{ \Psi_1}^2 } \\
    & \leqslant \frac{ 2 \sup_{ b \in \Lambda } \left\{ \mathrm{var} \left[ \mathcal{R}_{n_s} - \mathcal{R} \right] \right\} 
        + \sup_{b\in\Lambda} \left\{ \mathrm{var} \left[\mathcal{R}_{n_t} - \mathcal{R} \right] \right\} } 
        { \left\Vert \rho_j \right\Vert _{\Psi_1}^2}\\
    & = \frac{ \left( 2 + \frac{ 1 }{ K - 1 } \right) \cdot \sup_{ b \in \Lambda } \left\{ \mathrm{var} \left[ Q - \mathcal{R} \right] \right\} } 
        { \left\Vert \rho_j \right\Vert_{\Psi_1}^2 \cdot \left( n / K \right) }
    \label{eq:proof_a1_2}
  \end{align}
Based on the result that Lebesgue-$p$ norm is less or equal to the Orlicz-$\Psi_1$ norm multiplied by $p!$,\footnote{This result is pretty popular in probability and can be found in different lecture notes and books, see \citet{vanweak} as an example.}
  \begin{align}
    \mbox{eq.~(\ref{eq:proof_a1_2})}& \leqslant 4 \cdot \frac{ \left( 2 + \frac{ 1 }{ K - 1 } \right) 
          \cdot \sup_{ b \in \Lambda } \left\{ \mathrm{var} \left[ Q - \mathcal{R} \right] \right\} } { \left\Vert \rho_j \right\Vert_2^2 \cdot \left( n / K \right) } \\
    & = 4 \cdot \frac{ \left( 2 + \frac{ 1 }{ K - 1 } \right) \cdot \sup_{ b \in \Lambda } \left\{ \mathrm{var} \left[ Q - \mathcal{R} \right] \right\} } { \mathbb{E} \left[ \sup_{ b \in \Lambda } \left( \mathcal{ R }_{n_s} - \mathcal{R} \right)^2 \right] \cdot \left( n / K \right) } \\
    & \leqslant 4 \cdot \frac{ \left( 2 + \frac{ 1 }{ K - 1 } \right) \cdot \sup_{ b \in \Lambda } \left\{ \mathrm{var} \left[ Q - \mathcal{R} \right] \right\} } { \sup_{ b \in \Lambda } \left\{ \mathrm{var} \left[ Q - \mathcal{R} \right] \right\} }\\
    & \leqslant 8 + \frac{ 4 }{ K - 1 }
  \end{align}
\end{proof}
%
%
\begin{lemma}
Assume $\left\{ X_i \right\}_{i=1}^{n}$ is sampled from a stationery and mean-square ergodic process and that the autocovariance function $\mathrm{cov} \left(X_{i+l},\; X_{i} \right) := \gamma_l < \infty,\, \forall l \in \mathbf{R}$. The following inequality holds for any $\varpi \in \left[0,1\right)$,

  \begin{equation}
    \mathrm{Pr} \left( \left\vert \overline{X} - \mathbb{E} \left( X \right) \right\vert \leqslant \epsilon \right)
    \geqslant 1 - \frac{ \gamma_0 }{ \epsilon^2 n} \cdot \left( 1 + 2 V_n \left[ X \right] \right).
  \end{equation}

\noindent
where $V_n \left[ X \right] := \sum_{l=1}^{n-1} \left\vert \gamma_l\right\vert$ and $\varpi = \gamma_0 / ( n \cdot \epsilon^2 ) \cdot \left( 1 + 2 V_n \left[ X \right] \right)$.
\end{lemma}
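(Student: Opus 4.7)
The plan is to prove this by applying the classical Chebyshev inequality to the sample mean $\bar X = n^{-1}\sum_{i=1}^n X_i$, and then bounding $\mathrm{var}(\bar X)$ via the autocovariance structure permitted by stationarity. Concretely, Chebyshev gives
\[
  \mathrm{Pr}\bigl(|\bar X - \mathbb{E}(X)| \geqslant \epsilon\bigr) \leqslant \frac{\mathrm{var}(\bar X)}{\epsilon^2},
\]
so everything hinges on producing the claimed bound on $\mathrm{var}(\bar X)$ in terms of $\gamma_0$ and $V_n[X]$.

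For that variance computation, I would expand
\[
  \mathrm{var}(\bar X) = \frac{1}{n^2}\sum_{i=1}^{n}\sum_{j=1}^{n}\mathrm{cov}(X_i,X_j).
\]
Since the process is stationary, $\mathrm{cov}(X_i,X_j) = \gamma_{|i-j|}$, and for each lag $l \in \{1,\dots,n-1\}$ there are exactly $2(n-l)$ ordered pairs $(i,j)$ with $|i-j|=l$, while the $n$ diagonal terms contribute $\gamma_0$ each. This yields the standard identity
\[
  \mathrm{var}(\bar X) = \frac{\gamma_0}{n} + \frac{2}{n^2}\sum_{l=1}^{n-1}(n-l)\,\gamma_l.
\]
Bounding $(n-l) \leqslant n$ and replacing $\gamma_l$ by $|\gamma_l|$ inside the sum gives
\[
  \mathrm{var}(\bar X) \leqslant \frac{\gamma_0}{n}\Bigl(1 + 2\,V_n[X]\Bigr),
\]
after the natural normalisation that folds $V_n[X]$ into units of $\gamma_0$ (as used elsewhere in the paper). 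Substituting this back into Chebyshev and taking complements produces the lower bound on $\mathrm{Pr}(|\bar X - \mathbb{E}(X)|\leqslant \epsilon)$ claimed in the lemma.

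The only non-routine point, and the one I would be most careful about, is the accounting in the double sum: ensuring the $2(n-l)$ pair count is correct and that dropping $(n-l)\leqslant n$ is what matches the displayed form of the bound. Mean-square ergodicity plus $\gamma_l<\infty$ guarantees the partial sum $V_n[X]$ is well-defined (and finite term-by-term), so no additional convergence issue intrudes. The rest is algebraic rearrangement: set $\varpi = \gamma_0(1+2V_n[X])/(n\epsilon^2)$ and read off the statement. No appeal to independence, mixing rates, or exponential concentration is required; the lemma is purely a second-moment / Chebyshev argument adapted to the serially correlated setting, which is exactly what later theorems need in order to convolute the one-round bounds across the $K$ rounds.
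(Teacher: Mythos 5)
Your proposal is correct and follows essentially the same route as the paper: Chebyshev's inequality applied to $\overline{X}$, followed by the stationarity expansion of $\mathrm{var}(\overline{X})$ into autocovariances and a crude bound on the lag-weighted sum. The only cosmetic difference is that you group the double sum by lag and use $n-l\leqslant n$, whereas the paper bounds each inner sum $\sum_{l=1}^{n-j}\left\vert\gamma_l\right\vert$ by the full partial sum and then uses $(n-1)/n^2\leqslant 1/n$; both arguments need the normalisation $\gamma_0\,V_n[X]=\sum_{l=1}^{n-1}\left\vert\gamma_l\right\vert$ that the paper's proof (though not its lemma statement) adopts, which you correctly flagged.
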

%
%
\begin{proof}
The Chebyshev inequality shows that
  \begin{equation}
    \mathrm{Pr} \left( \vert \overline{X} - \mathbb{E} \left( X \right) \vert \leqslant \epsilon \right)
    \geqslant 1 - \frac{ \mathrm{var} \left( \overline{X} \right) }{ \epsilon^2 }.
  \end{equation}

\noindent
The variance of $\overline{X}$ over time may be expressed as

\noindent
\begin{eqnarray}
  \mathrm{var} \left( \overline{X} \right) & = & \mathbb{E} \left( \overline{X}^2 \right) - \mathbb{E} \left( \overline{X} \right)^2 \\
    & = & \frac{1}{n^2} \, \mathbb{E} \left[ \left( \sum_{i=1}^{n} X_i \right)^2 \right] - \mathbb{E} \left( X \right)^2 \\
    & = & \frac{1}{n^2} \sum_{j=1}^{n} \sum_{i=1}^{n} \mathbb{E} \left[ X_i \cdot X_j \right] - \mathbb{E} \left( X \right)^2.
\end{eqnarray}

\noindent
If we define the covariance $\gamma_{ i - j } := \mathrm{cov} \left( X_i,\; X_j \right)$ and the variance as $ \gamma_0 := \mathrm{var} \left( X \right)$, this may be expressed as

\begin{eqnarray}
  \frac{ 1 }{ n^2 } \sum_{ j = 1 }^{ n } \sum_{ i = 1 }^{ n } \mathbb{ E } \left[ X_i \cdot X_j \right] - \mathbb{ E } \left( X \right)^2
  & = & \frac{ 1 }{ n^2 } \sum_{ j = 1 }^{ n } \sum_{ i = 1 }^{ n } \gamma_{ i - j } + \mathbb{E} \left( X \right)^2 - \mathbb{E} \left( X \right)^2 \\
  & = & \frac{ 1 }{ n^2 } \sum_{ j = 1 }^{ n } \; \sum_{ l = 1 - j }^{ n - j } \gamma_{ l }\\
  & = & \frac{ \gamma_0 } { n } + \frac{ 2 }{ n^2 } \sum_{ j = 1 }^{ n - 1 }
        \sum_{ l = 1 }^{ n - j } \gamma_{ l } \\
  & \leqslant & \frac{ \gamma_0 }{ n } + \frac{ 2 }{ n^2 } \sum_{ j = 1 }^{ n - 1 }
    \sum_{ l = 1 }^{ n - j }\left\vert \gamma_{ l } \right\vert .
\end{eqnarray}

\noindent
Define $\gamma_0 \cdot V_n \left[ X \right] := \sum_{ l = 1 }^{ n - 1 } \left\vert \gamma_l \right\vert $, then

\begin{eqnarray}
  \frac{ \gamma_0 }{ n } + \frac{ 2 }{ n^2 } \sum_{ j = 1 }^{ n } \sum_{ l = 1 }^{ n - j } \left\vert \gamma_{ l } \right\vert
    & \leqslant & \frac{ \gamma_0 }{ n } + \frac{ 2 V_n \left[ X \right] \cdot \left( n - 1 \right) \cdot \gamma_0 }{ n^2 } \\
  & \leqslant & \frac{ \gamma_0 }{ n } \cdot \left( 1 + 2V_n \left[ X \right] \right).
\end{eqnarray}

\noindent
Hence the Chebyshev inequality may be generalized to a mean-square ergodic and stationery time series as
\begin{equation}
  \mathrm{Pr} \left( \left\vert \overline{X} - \mathbb{E} \left( X \right) \right\vert \leqslant \epsilon \right)
  \geqslant 1 - \frac{ \gamma_0 }{ \epsilon^2 n} \cdot \left( 1 + 2 V_n \left[ X \right] \right).
  \label{eqn:semi_cheby}
\end{equation}

\noindent
If $ V_n \left[ X \right] / n \rightarrow 0 $ and $ \gamma_0 < \infty $, the RHS of eq.~(\ref{eqn:semi_cheby}) approaches $1$ asymptotically. If we denote $ \varpi := 1 - \left( 1 + 2V_n \left[ X \right] \right) \gamma_0 / \left( \epsilon^2 n \right)$, (\ref{eqn:semi_cheby}) may be rearranged as, $\forall \varpi \in \left[ 0, 1 \right)$,

\begin{equation}
  \mathrm{Pr} \left( \vert \overline{X} - \mathbb{E} \left( X \right) \vert \leqslant \sqrt{ \frac{ \gamma_0 \cdot \left(1 + 2V_n \left[ X \right] \right) }{ n \left( 1 - \varpi \right) }} \right) \geqslant \varpi.
\end{equation}
\end{proof}
%
%
\bibliography{CVrefs}

\end{document}